\documentclass{article}




\usepackage[final]{neurips_2021}


\usepackage[utf8]{inputenc} 
\usepackage[T1]{fontenc}    
\usepackage{url}            
\usepackage{booktabs}       
\usepackage{amsfonts}       
\usepackage{nicefrac}       
\usepackage{microtype}      

\usepackage[usenames, dvipsnames]{color} 
\usepackage{enumerate, amsmath, amsthm, amssymb,  algpseudocode, pifont, fullpage, csquotes, dashrule, tikz, bbm, booktabs, bm, verbatim, url, float}
\usepackage[framemethod=TikZ]{mdframed}
\usepackage{natbib}
\usepackage[ruled,vlined]{algorithm2e}
\usepackage[colorlinks]{hyperref}
\hypersetup{
     colorlinks = true,
     linkcolor = blue,
     anchorcolor = blue,
     citecolor = red,
     filecolor = blue,
     urlcolor = blue
     }
\usepackage{caption}
\usepackage{subcaption}

\definecolor{dblue}{RGB}{98, 140, 190}
\definecolor{dlblue}{RGB}{216, 235, 255}
\definecolor{dgreen}{RGB}{124, 155, 127}
\definecolor{dpink}{RGB}{207, 166, 208}
\definecolor{dyellow}{RGB}{255, 248, 199}
\definecolor{dgray}{RGB}{46, 49, 49}

\newcommand{\durl}[1]{\textcolor{dblue}{\underline{\url{#1}}}}





\newcommand{\ubr}[1]{\underbrace{#1}}

\newcommand{\eps}{\varepsilon}



\newcommand{\mc}[1]{\mathcal{#1}}

\newcommand{\bE}{\mathbb{E}}
\newcommand{\bV}{\mathbb{V}}
\newcommand{\bR}{\mathbb{R}}
\newcommand{\bP}{\mathbb{P}}
\newcommand{\bQ}{\mathbb{Q}}
\newcommand{\bI}{\mathbb{I}}
\newcommand{\bH}{\mathbb{H}}

\newcommand{\bF}{\mathbb{F}}

\newcommand{\bS}{\mathbb{S}}

\newcommand{\bX}{\mathbb{X}}

\newcommand{\kl}[2]{D_{\mathrm{KL}}(#1\text{ }||\text{ }#2)}

\newcommand{\ra}{\rightarrow}

\newcommand{\inv}[1]{#1^{-1}}






\newmdenv[
  topline=false,
  bottomline=false,
  rightline = false,
  leftmargin=10pt,
  rightmargin=0pt,
  innertopmargin=0pt,
  innerbottommargin=0pt
]{innerproof}


\newcounter{DaveDefCounter}
\setcounter{DaveDefCounter}{1}




\newtheorem{corollary}{Corollary}

\newtheorem{lemma}{Lemma}

\newtheorem{fact}{Fact}

\def\environment{\mathcal{E}}

\def\Pr{\mathbb{P}}

\def\1{\mathbf{1}}

\newif\ifsubmit
\submittrue
\ifsubmit
\newcommand{\dnote}[1]{}
\newcommand{\bnote}[1]{}
\else
\newcommand{\dnote}[1]{\textcolor{blue}{Dilip: #1}}
\newcommand{\bnote}[1]{\textcolor{orange}{Ben: #1}}
\fi

\title{The Value of Information \\ When Deciding What to Learn}

%

\author{%
  Dilip Arumugam\\
  Department of Computer Science\\
  Stanford University\\
  \texttt{dilip@cs.stanford.edu}\\
  \And
  Benjamin Van Roy \\
  Department of Electrical Engineering \\
  Department of Management Science \& Engineering\\
  Stanford University\\
  \texttt{bvr@stanford.edu} \\
}

\begin{document}

\maketitle

\begin{abstract}
  All sequential decision-making agents explore so as to acquire knowledge about a particular target. It is often the responsibility of the agent designer to construct this target which, in rich and complex environments, constitutes a onerous burden; without full knowledge of the environment itself, a designer may forge a sub-optimal learning target that poorly balances the amount of information an agent must acquire to identify the target against the target's associated performance shortfall. While recent work has developed a connection between learning targets and rate-distortion theory to address this challenge and empower agents that decide what to learn in an automated fashion, the proposed algorithm does not optimally tackle the equally important challenge of efficient information acquisition. In this work, building upon the seminal design principle of information-directed sampling \citep{russo2014learning}, we address this shortcoming directly to couple optimal information acquisition with the optimal design of learning targets. Along the way, we offer new insights into learning targets from the literature on rate-distortion theory before turning to empirical results that confirm the value of information when deciding what to learn.
\end{abstract}

\section{Introduction}
Information acquisition is a fundamental consideration in the design of sequential decision-making agents. Indeed, the literature over recent years has unequivocally demonstrated its theoretical~\citep{russo2016information,russo2014learning,russo2018learning} and empirical~\citep{osband2016deep,osband2016generalization,nikolov2018information} importance in addressing the challenge of exploration in sequential decision-making problems. Despite this significance, a strategy for information acquisition, however efficient, is only as good as the information it uncovers. An agent that displays remarkable efficiency in seeking immaterial information offers little utility to its designer. Similarly, when faced with the rich complexity of real-world tasks, a computationally-bounded agent aspiring to learn that which is well beyond its constrained resources is equally ineffective. As a concrete example of the latter scenario, consider learning the optimal policy for the game of Go~\citep{silver2017mastering} solely relying on the hardware of a mobile device. While not an entirely infeasible task, the amount of time needed to complete such an endeavor may force the agent designer into settling for a small fraction of the optimal policy performance in exchange for a substantial reduction in the difficulty of the learning problem.
    
Recent work has developed a connection between rate-distortion theory~\citep{shannon1959coding,berger1971rate} and the problem of deciding what an agent should learn~\citep{arumugam2021deciding}, optimally balancing between the requisite information needed by any agent to identify a learning target and the corresponding performance shortfall associated with this target. Intuitively, the complexity of a learning problem can be quantified by the bits of information an agent must acquire from the environment in order to realize a particular target. Rate-distortion theory offers a principled framework for reasoning about a learning target that comprises the minimum number of bits from the environment while retaining some degree of near-optimality. \citet{arumugam2021deciding} operationalize the classic Blahut-Arimoto algorithm~\citep{blahut1972computation,arimoto1972algorithm} in the multi-armed bandit setting to facilitate computationally-tractable agents that are not only capable of computing such an optimal target at the start of learning but also gradually adapt this target over time as the agent's knowledge of the environment accumulates. The resulting Blahut-Arimoto Satisficing Thompson Sampling algorithm (BLASTS) then applies the well-known principle of probability matching in order to select actions~\citep{thompson1933likelihood,russo2018tutorial}. While BLASTS demonstrates the desired ability to accommodate an agent designer in recovering a wide spectrum of policies with varying levels of performance and learning efficiency, it is well known that the underlying probability matching principle of Thompson Sampling constitutes a sub-optimal exploration strategy~\citep{russo2018learning}. 

In this work, we combine the optimal computation of learning targets via rate-distortion theory with optimal information acquisition. To achieve the latter goal, we generalize the information-directed sampling (IDS) principle of \citet{russo2018learning} so that the fundamental information ratio captures expected regret and information gain about a learning target computed via the Blahut-Arimoto algorithm. As with BLASTS, the resulting algorithms which instantiate this Blahut-Arimoto Information-Directed Sampling principle (BLAIDS) retain a single, real-valued hyperparameter through which an agent designer expresses a preference for the balance struck by the resulting learning target. Moreover, just as BLASTS offers a generalization of Thompson Sampling capable of more efficiently computing optimal policies, we find that a particular computationally-tractable instantiation of BLAIDS mirrors this property relative to its IDS counterpart. En route to deriving BLAIDS, we outline the precise details of the Blahut-Arimoto algorithm itself and connect our setting with various results from the information-theory literature.

The paper proceeds as follows: in Section \ref{sec:learning_targets} we offer a general presentation of information-theoretic learning targets for sequential decision-making problems. We then hone in on the multi-armed bandit setting in Section \ref{sec:target_acts} to carefully study incorporating the value of information when acting in pursuit of an arbitrary learning target. We conclude in Section \ref{sec:exps} with a corroborating set of computational experiments that demonstrate the efficacy of our approach. Due to space constraints, we relegate background material on information theory, related work, and all technical proofs to the appendix.

\section{Learning Targets}
\label{sec:learning_targets}

This section focuses on a general treatment of learning targets via rate-distortion theory, moving beyond the assumptions of \citet{arumugam2021deciding} and offering further insight into the mathematical details of the Blahut-Arimoto algorithm.

\subsection{Problem Formulation}

In this section, we follow the general problem formulation of \citet{lu2021reinforcement} that offers a unified treatment of sequential decision-making problems, spanning multi-armed bandits and reinforcement-learning problems~\citep{lattimore2020bandit,sutton1998introduction}. We adopt a generic agent-environment interface wherein, at each time period, the agent executes an action $A_t \in \mc{A}$ within an environment $\mc{E}$ which results in an associated next observation $O_{t+1} \in \mc{O}$. This sequential interaction between agent and environment results in an associated history at each timestep $H_t = (A_0, O_1, \ldots, A_{t-1}, O_t) \in \mc{H}$ representing the action-observation sequence available to the agent upon making its selection of $A_t$. More formally, we may characterize the overall environment as $\mc{E} = \langle \mc{A}, \mc{O}, \rho \rangle$ containing the action set $\mc{A}$, observation set $\mc{O}$, and the observation function $\rho: \mc{H} \times \mc{A} \ra \Delta(\mc{O})$ that prescribes the distribution over next observations given the current history and action selection: $$\bP(O_{t+1} | \mc{E}, H_t, A_t) = \bP(O_{t+1} | H_t, A_t) \qquad O_{t+1} \sim \rho(\cdot | H_t, A_t).$$

An agent's policy $\pi: \mc{H} \ra \Delta(\mc{A})$ encapsulates the relationship between the histories encountered in each timestep $H_t$ and the executed action $A_t$. We maintain that actions are independent of the environment conditioned on history, $A_{t} \perp \environment | H_t$ such that $\pi(a | H_t) = \Pr(A_t = a| H_t)$ assigns a probability to each action $a \in \mc{A}$ given the history. 

An agent designer expresses preferences across histories via a reward function $r: \mc{H} \times \mc{A} \times \mc{O} \ra \bR$ such that an agent enjoys a reward $R_{t+1} = r(H_t, A_t, O_{t+1})$ at each timestep. Given any finite time horizon $T < \infty$, the accumulation of rewards provide a notion of return $\sum\limits_{t=0}^{T-1} R_{t+1}$. To develop preferences over behaviors, it is natural to associate with each policy $\pi$ a corresponding expected return or value across the horizon $T$ as $\overline{V}^\pi = \bE\left[\sum\limits_{t=0}^{T-1} R_{t+1} \mid \mc{E} \right],$ where the expectation integrates over the randomness in the policy $\pi$ as well as the observation function $\rho$. A trademark across the sequential decision-making literature is the design of agents that strive to achieve the optimal value within the confines of some policy class $\Pi \subseteq \{\pi: \mc{H} \ra \Delta(\mc{A})\}$, $\overline{V}^\star = \sup\limits_{\pi \in \Pi} \overline{V}^\pi$. Notice that when rewards and the distribution of the next observation $O_{t+1}$ depend only on the current observation $O_t$, rather than the full history $H_t$, we recover the traditional Markov Decision Process~\citep{bellman1957markovian,Puterman94} studied throughout the reinforcement-learning literature~\citep{sutton1998introduction}. Alternatively, when these quantities rely solely upon the most recent action $A_t$, we recover the traditional multi-armed bandit~\citep{lattimore2020bandit}.

\subsection{The Curse of Curiosity}

While the design of agents in pursuit of optimal policies is a perfectly natural object of study, it can often occur without regard for the complexity of the environment. In particular, the optimal policy is often a deterministic function of the environment such that, if an agent manages to identify the environment, it possesses all requisite information to determine the optimal policy. An agent designer reflects their initial uncertainty about the environment through a prior distribution $\Pr(\environment \in \cdot)$; as the history unfolds, what can be learned from the agents current knowledge of the environment is represented by conditional probabilities $\Pr(\environment \in \cdot | H_t)$. Through this prior distribution, a total of $\bH(\mc{E})$ bits quantify all of the information needed for identifying the environment. Prior work has demonstrated both theoretically and empirically how it is an agent's incremental resolution of epistemic uncertainty over the environment $\mc{E}$ that gives rise to the efficient acquisition of these $\bH(\mc{E})$ bits ~\citep{chapelle2011empirical,russo2016information,osband2016deep,agrawal2017optimistic,o2018uncertainty,osband2019deep}. For sufficiently rich and complex environments, however, $\bH(\mc{E})$ can become prohibitively large or even infinite, making the pursuit of an optimal policy entirely intractable. 

This issue warrants the construction of a more general mechanism whereby an agent designer may express tractable learning targets, alternatives to the optimal solution that, while potentially incurring some degree of suboptimality, fall within the means of a computationally-bounded agent interacting with a highly-complex environment. \citet{lu2021reinforcement} introduce this notion of a learning target $\chi$ as a generic random variable and (possibly stochastic) function of the environment $\mc{E}$. Their proposed desiderata for the construction of learning target is two-fold: (1) demanding only a feasible amount of information from the environment $\bI(\environment; \chi)$ in order to reconcile $\chi$ and (2) incurring a bounded degree of performance shortfall by aiming for $\chi$, $\bE[\overline{V}^\star - \overline{V}^{\pi_\chi}]$, where $\pi_\chi$ is a target policy induced by $\chi$. Observe, however, that this treatment of a learning target assumes its provision to the agent before interaction with the environment has begun. Consequently, the burden of specifying a suitable learning target that best balances between the two desiderata falls upon the agent designer. To remedy this, we begin by extending the formulation of \citet{arumugam2021deciding} and emphasizing rate-distortion theory as a general tool for the design of optimal learning targets. 

It is important to re-iterate and clarify that the role of $\pi_\chi$ is not to achieve performance that is on par with the optimal $\pi^\star$, but rather to strike an appropriate balance between performance shortfall relative to $\pi^\star$ and the ease of learnability; the latter desideratum is accurately captured mathematically via bits of information. As an agent shifts its focus away from $\pi^\star$ towards a $\chi$ that is easier to learn (in the information-theoretic sense used throughout this paper), we anticipate greater data efficiency. That is, an agent should be able to more quickly identify the desired $\chi$ relative to the requisite amount of time and data needed to identify $\pi^\star$. Naturally, in a complex environment, we expect this data efficiency to be critical and to come at some cost: the sub-optimality of the resulting solution (which we take as the distortion in our rate-distortion framing). An agent designer’s willingness to tolerate this sub-optimality stems from the complexity of the environment and an acknowledgement of the potentially intractable amount of data needed to otherwise identify and recover $\pi^\star$. The more performance shortfall one is willing to tolerate relative to optimal behavior, the more efficiently an agent should be able to find such a satisficing solution. \citet{russo2018satisficing} already highlight one example of this phenomenon in the multi-armed bandit setting, where the presence of multiple arms can make pursuit of an $\eps$-optimal arm far more data efficient than the optimal arm. From the perspective of the agent designer, one should not incorporate a learning target with the expectation of being competitive with the optimal policy, but rather with the goal of being more data efficient in the acquisition of the learning target relative to that of the optimal policy.

\subsection{The Rate-Distortion Theory of Learning Targets}

Rate-distortion theory is a sub-area of information theory that encapsulates the foundations of lossy compression~\citep{shannon1959coding,berger1971rate,cover2012elements}. Abstractly, rate-distortion theory formulates a lossy compression problem as a constrained optimization in the space of probability kernels or channels, given an information source and a tolerable upper threshold on distortion. The natural goal is to identify a channel that preserves the minimum number of bits from the information source while adhering to the specified distortion upper bound. 

More formally, let $(\Omega, \mc{F}, \bP)$ be a probability space and consider a random variable $X:\Omega \ra \mc{X}$ taking values on the measurable space $(\mc{X}, \bX)$ with an associated marginal distribution $\bP_{X}(\cdot) = \bP(\inv{X}(\cdot))$ that represents an information source. Similarly, define the random variable $\widehat{X}:\Omega \ra \widehat{\mc{X}}$ that takes values on $(\widehat{\mc{X}}, \widehat{\bX})$ and corresponds to a channel output. Given a known, measurable distortion function $d:\mc{X} \times \widehat{\mc{X}} \mapsto \bR_{\geq 0}$ and a desired upper bound on distortion $D$, the rate-distortion function is defined as $\mc{R}(D) = \inf\limits_{\bP_{X,\widehat{X}} \in \Lambda} \mc{I}(X;\widehat{X})$
quantifying the minimum number of bits (on average) that must be communicated from $X$ across a channel in order to adhere to the specified expected distortion threshold $D$. Here, the infimum is taken over $\Lambda = \{\bP_{X,\widehat{X}} \mid \forall A \in \bX: \bP_{X,\widehat{X}}(A \times \widehat{\mc{X}}) = \bP_{X}(A) \text{ and } \bE[d(X,\widehat{X})] \leq D\}$ representing the set of all joint distributions on $(X \times \widehat{X}, \bX \times \widehat{\bX})$ whose corresponding marginal distribution on $X$ matches the original information source $\bP_X$ while also satisfying the constraint on bounded expected distortion. Intuitively, a higher rate corresponds to requiring more bits of information and smaller information loss between $X$ and $\widehat{X}$, enabling higher-fidelity reconstruction (lower distortion); conversely, lower rates reflect more substantial information loss, potentially exceeding the tolerance on distortion $D$. The remainder of this section is devoted to making the mathematical details of this problem precise in our specific context of specifying learning targets. We encourage readers to consult the appendix for the precise details of our notation.


At each timestep, an agent maintains its epistemic uncertainty over the environment via a posterior distribution $\bP(\environment | H_t)$. Since the derivation is identical for all timesteps, we suppress time and history from the notation for now and let $\bP_{\environment}$ denote this distribution at any arbitrary timestep. We will take the environment $\environment$ and learning target $\chi$ random variables to be associated with measurable spaces $(\Sigma, \mathbb{S})$ and $(\Xi, \bX)$, respectively. Let $\bP_{\environment, \chi}$ defined on $(\Sigma \times \Xi, \bS \times \bX)$ be a joint distribution over environment-target pairs. Furthermore, let $d: \Sigma \times \Xi \ra \bR_{\geq 0}$ be a known, measurable function that quantifies a notion of loss or \textit{distortion} that occurs by using a particular realization of the learning target within a specific realization of the environment. We may define the rate-distortion function as\footnote{Typically, $\mc{R}(D)$ is defined in terms of the mutual information, however the (equivalent) parameterization via the joint distribution $\bP_{\environment, \chi}$ will be useful in subsequent derivations.}
\begin{align}
    \mc{R}(D) = \inf\limits_{\bP_{\environment, \chi} \in \Lambda : \bE\left[d(\environment, \chi)\right] \leq D} \bI(\bP_{\environment, \chi}) \triangleq \inf\limits_{\bP_{\environment, \chi} \in \Lambda : \bE\left[d(\environment, \chi)\right] \leq D} \bI(\environment; \chi)
    \label{eq:rdf}
\end{align}
\begin{align}
    \Lambda = \{\bP_{\environment, \chi} \mid \forall A \in \bS: \bP_{\environment, \chi}(A \times \Xi) = \bP_{\environment}(A) \text{ and } \forall t \in [T]: \chi \perp H_t | \environment\}
    \label{eq:rdf_opt_space}
\end{align}

where the infimum is taken over the constrained set of joint probability measures $\Lambda$ such that (1) the $\environment$ marginal of $\bP_{\environment, \chi}$ matches the given source distribution $\bP_{\environment}$, (2) the expected distortion does not exceed the given threshold $D \in \bR_{> 0}$, and (3) the target is conditionally independent of all histories given the environment. These first two conditions are typical in rate-distortion theory while the final condition simply ensures that no history can offer more information about a learning target than what is already provided by the environment. While the distortion function $d$ will eventually be defined so as to quantify the performance shortfall of our learning target, we proceed with the generic, unspecified distortion function for now and assume that $\mc{R}(D) \ra 0$ as $D \ra \infty$. Following from this assumption along with the convexity of mutual information and Gibb's inequality, we may recover the basic facts that $\mc{R}(D)$ is a non-negative, convex, and monotonically-decreasing function in its argument~\citep{cover2012elements}. 

The details of computing rate-distortion functions as presented by \citet{arumugam2021deciding} rely on the classic derivation based on calculus and Lagrange multipliers, under the assumption that the information source and channel output are discrete random variables~\citep{blahut1972computation}. In brief and continuing with the first example of this section, the discrete algorithm proceeds by initializing $p_0(\widehat{x} \mid x) = \inv{|\widehat{\mc{X}}|}$, $\forall \widehat{x} \in \widehat{\mc{X}}$ and then iterating the following two updates $$q_k(\widehat{x}) = \sum\limits_{x \in \mc{X}} p(x)p_{k-1}(\widehat{x} \mid x) \qquad p_{k+1}(\widehat{x} \mid x) \propto q_k(\widehat{x})\exp\left(-\beta d(x,\widehat{x})\right),$$ where $p(x)$ denotes the information source and $\beta$ is a Lagrange multiplier. While discreteness carries computational conveniences for practical implementation, here we offer a more general presentation of the Blahut-Arimoto algorithm for arbitrary environments and learning targets. It is important to clarify that the subsequent derivations of this section are classic results in the information-theory literature~\citep{csiszar1974extremum,gray2011entropy} and do not represent novel theoretical contributions of this work; that said, we do take our contribution to be distilling such key results from the information-theory literature for broader consumption by the sequential decision-making community and, to that end, offer a full presentation for the benefit of readers. 

For the purposes of handling this constrained optimization, we introduce the following objective with parameter $\beta \geq 0$:
\begin{align*}
    \mc{F}(\beta) = \inf\limits_{\bP_{\environment, \chi} \in \Lambda} \left(
    \bI(\bP_{\environment, \chi}) + \beta \bE\left[d(\environment, \chi)\right]\right) = \inf\limits_{\bP_{\environment, \chi} \in \Lambda} \left(
    D_\text{KL}(\bP_{\environment, \chi} || \bP_{\environment} \times \bP_{\chi}) + \beta \bE\left[d(\environment, \chi)\right]\right)
\end{align*}

As noted by \citet{csiszar1974extremum}, $\mc{F}(\beta)$ is the largest vertical-axis intercept of a line with slope $-\beta$ that includes no point above the rate-distortion curve. Moreover, the rate-distortion function can be expressed in terms of this new objective, sweeping over all possible values of $\beta$.
\begin{lemma}[Lemma 1.2 \citep{csiszar1974extremum}]
\begin{align*}
    \mc{R}(D) = \max\limits_{\beta \geq 0} \mc{F}(\beta) - \beta D
\end{align*}
The value of $\beta$ that achieves this maximum is characterized as being \textit{associated} with the distortion threshold $D$. Conversely, a joint distribution $\bP_{\environment, \chi}$ that achieves the infimum of $\mc{F}(\beta)$ has an associate rate $\mc{R}(D) = \bI(\bP_{\environment, \chi})$ where $\beta$ is associated with $D = \bE_{\bP_{\environment, \chi}}[d(\environment, \chi)] \triangleq \int d(\environment, \chi) d\bP_{\environment, \chi}(\environment, \chi)$.
\label{lemma:rd_slope}
\end{lemma}

Lemma \ref{lemma:rd_slope} clarifies an important relationship between the parameter $\beta$ and the rate-distortion curve, accommodating computation of the rate-distortion function upon specification of a particular $\beta$ values rather than a distortion upper bound $D$; in particular, each $\beta$ corresponds to the slope of a tangent line to the rate-distortion curve such that, given a value of $\beta$, the associated joint distribution that achieves the infimum in $\mc{F}(\beta)$ generates the point $(D,\mc{R}(D))$ on the curve. Per \citet{blahut1972computation}, computing the infimum in $\mc{F}(\beta)$ is handled by an alternating optimization procedure that relies on the following functional:
\begin{align*}
    \mc{J}(\bP_{\environment, \chi}, \bQ_{\chi}, \beta) =
    D_\text{KL}(\bP_{\environment, \chi} || \bP_{\environment} \times \bQ_{\chi}) + \beta \bE_{\bP_{\environment, \chi}}[d(\environment, \chi)]
\end{align*}

which replaces the marginal distribution over $\chi$ induced by $\bP_{\environment, \chi}$ with an alternative distribution $\bQ_\chi$. When $\mc{J}(\bP_{\environment, \chi}, \bQ_{\chi}, \beta)$ is finite, this implies that $\bP_{\environment, \chi} \ll \bP_{\environment} \times \bQ_{\chi}$; moreover, if $\bE_{\bP_{\environment, \chi}}[d(\environment, \chi)]$ is finite, then $\bP_{\environment} \times \bQ_{\chi}(\{\environment,\chi |  d(\environment,\chi) < \infty \}) > 0$. Taken together, these facts imply that for any realization of the environment  $$\alpha_{\bQ_{\chi},\beta}(\environment) = \left[\int \exp\left(-\beta d(\environment, \chi)\right) d\bQ_{\chi}(\chi)\right]^{-1} < \infty$$ holds $\bP_{\environment}$-almost surely. Thus, for any marginal over learning targets $\bQ_{\chi}$, we may induce a new joint distribution $\bQ_{\environment,\chi}$ whose corresponding Radon-Nikodym derivative with respect to $\bP_{\environment} \times \bQ_{\chi}$ is given by 
\begin{align}
    \frac{d\bQ_{\environment,\chi}}{d\bP_{\environment} \times \bQ_{\chi}} = \alpha_{\bQ_{\chi},\beta}(\environment) \exp\left(-\beta d(\environment, \chi)\right)
    \label{eq:ba_target_update}
\end{align}

While the corresponding $\chi$ marginal of this new joint distribution need not be identical to $\bQ_{\chi}$, we can show that our original source distribution $\bP_{\environment}$ is preserved under the corresponding $\environment$ marginal. In particular, for any measurable set $A \in \bS$, we have 
\begin{align*}
    \bQ_{\environment,\chi}(A \times \Xi) &= \int\limits_{F \times \Xi} \frac{d\bQ_{\environment,\chi}}{d\bP_{\environment} \times \bQ_{\chi}} d\bP_{\environment} \times \bQ_{\chi}(\environment,\chi) 
    = \int\limits_{F \times \Xi} \alpha_{\bQ_{\chi},\beta}(\environment) \exp\left(-\beta d(\environment, \chi)\right) d\bP_{\environment} \times \bQ_{\chi}(\environment,\chi) \\
    &= \int\limits_{A} \ubr{\left(\alpha_{\bQ_{\chi},\beta}(\environment) \int \exp\left(-\beta d(\environment, \chi)\right) d\bQ_{\chi}(\chi)\right)}_{= 1} d\bP_{\environment}(\environment) = \bP_{\environment}(A),
\end{align*}

where the second line uses the aforementioned integrability conditions to apply Fubini's Theorem. To recover an alternating optimization procedure for minimizing our objective and computing the rate-distortion function, we may leverage the following relationship between the original and new joint distributions:
\begin{lemma}[Lemma 1.2 \citep{csiszar1974extremum}]
Let $\bQ_\chi$ be an arbitrary marginal distribution over $\chi$. Moreover, let $\bP_{\environment,\chi}$ be an arbitrary joint distribution over $\environment,\chi$ with an associated marginal distribution $\bP_{\chi}$ and take $\bQ_{\environment,\chi}$ to be the joint distribution as defined in Equation \ref{eq:ba_target_update}. Then,
\begin{align*}
    \mc{J}(\bP_{\environment, \chi}, \bQ_{\chi}, \beta) &\geq  \mc{J}(\bP_{\environment, \chi}, \bP_{\chi}, \beta) \\
    \mc{J}(\bP_{\environment, \chi}, \bQ_{\chi}, \beta) &\geq  \mc{J}(\bQ_{\environment, \chi}, \bQ_{\chi}, \beta)
\end{align*}
\label{lemma:ba_ineqs}
\end{lemma}

Lemma \ref{lemma:ba_ineqs} prescribes a natural algorithm for the computation of rate-distortion functions. Upon the provision of a source distribution over environments $\bP_{\environment}$ and an initial marginal distribution over learning targets $\bQ_\chi^{(1)}$, one may first compute an updated channel distribution $\bQ_{\environment,\chi}^{(1)}$ per Equation \ref{eq:ba_target_update}. The induced marginal of  $\bQ_{\environment,\chi}^{(1)}$ serves as the next distribution $\bQ_\chi^{(2)}$, giving rise to another updated channel $\bQ_{\environment,\chi}^{(2)}$. Naturally, this process may continue iteratively until convergence. By the second inequality of Lemma \ref{lemma:ba_ineqs}, the updated channel distributions $\bQ_{\environment,\chi}^{(k)}$ are non-increasing in the objective and, analogously, the induced marginal distribution is also guaranteed to not increase the objective by the first inequality. This procedure is precisely the classic Blahut-Arimoto algorithm~\citep{blahut1972computation,arimoto1972algorithm}, generalized beyond the assumptions of discrete source and channel output random variables.

\subsection{Target Policies for Reinforcement Learning}
\label{sec:target_policies}

With the results of the previous section in hand, an agent may, at any timestep, run the Blahut-Arimoto algorithm using its current posterior beliefs over the environment $\bP(\environment | H_t)$ as a source distribution in order to compute a learning target that optimally negotiates between the information required to learn and the resulting target performance shortfall. In the context of reinforcement-learning problems, this learned target $\chi$ corresponds to a particular target policy $\pi_\chi$ and, to ensure the latter criterion holds, a natural notion of distortion is the expected squared regret between the optimal and target policies:
\begin{align*}
    d(\environment,\chi | H_t) &= \bE\left[(\overline{V}^\star - \overline{V}^{\pi_\chi})^2 | \environment, H_t \right] = \bE\left[\left(\bE_{\pi_\chi}\left[\sum\limits_{t'=t}^{T-1} \left(V^\star(H_{t'}) - Q^\star(H_{t'}, A_t) \right)\right]\right)^2 \Big| \environment, H_t \right]. 
    \label{eq:rl_distortion}
\end{align*}
The final equality follows from a variant of the performance-difference lemma~\citep{kakade2002approximately} and is proven for our setting as Theorem 1 of \citet{lu2021reinforcement}. Unfortunately, while the results of the previous section hold for this particular sequence of rate-distortion functions, the choice of $\chi$ as a target policy represents an obstacle to practical implementation. In particular, this would imply that the resulting channel of the Blahut-Arimoto algorithm would map between realizations of the environment $\environment$ to target policies $\pi: \mc{H} \ra \Delta(\mc{A})$. Moreover, computation of the distortion associated with each possible realization of the target policy $\pi_\chi$ would require either an independent policy-evaluation step to compute the value function $\overline{V}^{\pi_\chi}$ or demand sampling trajectories of each potential $\pi_\chi$ from the environment in order to leverage the final performance shortfall decomposition shown above. 

One possible resolution to these issues might include leveraging recent progress in unsupervised skill discovery~\citep{eysenbach2018diversity} such that potential target policies $\pi_\chi$ all come from the resulting parameterized policy class. Computing the associated universal value function approximator~\citep{schaul2015universal} for this policy class would allow for efficient computation of the distortion function during the rate-distortion optimization. As an alternative, one might further restrict focus to a tabular Markov Decision Process, at which point the target policy in question can be expressed as a finite sequence of action random variables (one per state); it may then be possible to ``stitch'' $\pi_\chi$ from the solutions to each of the state-dependent rate-distortion functions.

For now, we leave the question of how to adaptively compute optimal target policies in a computationally-tractable manner to future work and, instead, turn our attention to the multi-armed bandit setting where we may more feasibly study the coupling of optimal learning targets with optimal information acquisition.

\section{Target Actions \& The Value of Information}
\label{sec:target_acts}

In order to develop a computationally-tractable algorithm for combining learning targets with IDS, this section focuses on multi-armed bandit problems where we take the learning target $\chi$ to be a target action the agent aims to identify from the environment.

\subsection{Blahut-Arimoto Satisficing Thompson Sampling}

We begin by taking a closer look at the design of BLASTS and examine the extent to which learning targets computed via the Blahut-Arimoto algorithm are preferable to hand-crafted learning targets. As a first observation, we recall that while BLASTS does employ the Blahut-Arimoto algorithm as described in the previous section, the function computed by \citet{arumugam2021deciding} is actually the so-called ``plug-in'' estimator for the rate-distortion function~\citep{harrison2008estimation}; rather than directly using the agent's representation of environment uncertainty as an information source, BLASTS draws a fixed number of posterior samples $z$ and uses the resulting empirical distribution as an input source to the Blahut-Arimoto algorithm. Such an approach is amenable to scenarios where an agent is only able to sample from its beliefs over the environment but cannot reliably provide estimates of likelihoods~\citep{osband2016deep,lu2017ensemble}. 

Clearly, as $z \ra \infty$, the empirical distribution converges in probability to the true distribution and yet, implicit in the design of BLASTS is an assumption that a similar convergence statement holds for the plug-in estimator and the true rate-distortion function $\mc{R}(D)$. Fortunately, \citet{harrison2008estimation} have already proven general consistency results for the plug-in estimator of the rate-distortion function in two distinct settings of interest to sequential decision-making problems: (1) under a finite action set and (2) when the action set is a compact, separable metric space such that for each realization of the environment $\environment$, $d(\environment, \cdot)$ is continuous; naturally, the latter condition ensures consistency even under a continuous action space, although running the Blahut-Arimoto algorithm to optimize such a continuous channel output is non-trivial~\citep{dauwels2005numerical}.

While the previous result is encouraging, it immediately begs the question of how many posterior samples $z$ are needed for the plug-in estimator $\widehat{\mc{R}}(D)$ to, with high probability, be an $\eps$-accurate approximation of the true rate-distortion function $\mc{R}(D)$? While \citet{palaiyanur2008uniform} offer an answer to this question in their study of the uniform continuity of the rate-distortion function for discrete random variables, their analysis and corresponding sample-complexity bounds depend on a problem-specific parameter $\widetilde{D} = \min\limits_{(\environment,\chi): d(\environment,\chi) > 0} d(\environment,\chi)$
denoting the minimum non-zero distortion achieved by all environment-target pairs. While this constant does make sense in an information-theoretic context (for instance, under Hamming distortion, $\widetilde{D} = 1$), minimizing over all possible environments under a notion of distortion commensurate with expected squared regret engenders exceedingly small values of $\widetilde{D}$, rendering the associated bounds vacuous. To remedy this issue, we consider a minor modification to the statement of Lemma 2 by \citet{palaiyanur2008uniform}, replacing their more general result that carries an unfavorable dependence on $\widetilde{D}$ with a less general result in terms of 
\begin{align*}
    D(\bP_\environment) &= \min\limits_{\environment: \bP_\environment(\environment) > 0} \min\limits_{\chi: d(\environment,
    \chi) > 0} d(\environment,\chi)
\end{align*}
When the distortion function is aligned with the regret between the optimal action and target action, $D(\bP_\environment)$ corresponds to the worst-case action gap between the best and second-best actions~\citep{farahmand2011action,agrawal2013further,bellemare2016increasing} based on the agent's current posterior over the environment. It is important to note that while our adjustment to Lemma 2 (and its corresponding effect on Lemma 5) of \citet{palaiyanur2008uniform} carries meaningful semantics for our sequential decision-making setting, the proof of the modified result still follows exactly as in their original paper\footnote{Under a standard assumption that rewards are bounded in $[0,1]$ and $d(\environment,\chi)$ is a function of regret, the $d^\star$ term that appears in their original bound is equal to 1.}, which leverages the uniform continuity of entropy.

\begin{lemma}[Lemma 2 - \citep{palaiyanur2008uniform}]
Let $\environment,\chi$ be discrete random variables. Let $\bP_\environment$ denote the agent's current posterior over $\environment$ and let $\widehat{\bP}_\environment$ be the empirical distribution. If $||\bP_\environment - \widehat{\bP}_\environment||_1 \leq \frac{D(\bP_\environment)}{4}$, then for any $D \geq 0$, $$|\mc{R}(D) - \widehat{\mc{R}}(D)| \leq \frac{7}{D(\bP_\environment)} ||\bP_\environment - \widehat{\bP}_\environment||_1 \log\left(\frac{|\Sigma||\Xi|}{||\bP_\environment - \widehat{\bP}_\environment||_1}\right).$$
\label{lemma:unicon_rdf}
\end{lemma}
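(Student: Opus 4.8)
The plan is to follow the argument of \citet{palaiyanur2008uniform} essentially verbatim, the only changes being that their quantity $\widetilde{D}$ is everywhere replaced by $D(\bP_\environment)$ and that, under our standing assumption that rewards lie in $[0,1]$ and $d$ measures regret, the constant $d^\star$ equals $1$. Write $\delta := \|\bP_\environment - \widehat{\bP}_\environment\|_1$, let $\mathcal{R}$ denote the rate-distortion function of the source $\bP_\environment$ and $\widehat{\mathcal{R}}$ that of $\widehat{\bP}_\environment$, each an infimum of $\bI(\environment;\chi)$ over channels meeting the distortion constraint. Two standard facts drive everything. First, uniform continuity of entropy (a Fannes-type inequality): if two distributions on an $m$-element alphabet are within $\eta \le \tfrac12$ in $L_1$, their entropies differ by at most $\eta\log(m/\eta)$; applied to the source marginal, the target marginal, and the joint, this bounds how much $\bI(\environment;\chi)$ can change when the joint law of $(\environment,\chi)$ is perturbed by a small $L_1$ amount. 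Second, there is a zero-distortion channel: since $d(\environment,\chi)$ is the squared regret of the target action relative to the optimal one, the deterministic map $\chi^\star(\environment)$ returning the optimal action of $\environment$ has $d(\environment,\chi^\star(\environment)) = 0$, so $\chi^\star$ is feasible for every threshold and $\mathcal{R}(0) = \bI(\environment;\chi^\star(\environment)) \le \log|\Xi|$.

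The main step is a port-and-repair comparison between the two sources. Fix $D$; I describe $\widehat{\mathcal{R}}(D) \le \mathcal{R}(D) + (\text{error})$, the reverse direction being symmetric and in fact easier, since $\mathrm{supp}\,\widehat{\bP}_\environment \subseteq \mathrm{supp}\,\bP_\environment$ forces $D(\widehat{\bP}_\environment) \ge D(\bP_\environment)$ so the bound can only improve. Couple $\environment \sim \bP_\environment$ with $\widehat{\environment} \sim \widehat{\bP}_\environment$ so that $\bP(\environment \ne \widehat{\environment}) = \delta/2$. Take a near-optimal channel $W$ for the source $\bP_\environment$ at the stricter threshold $D' := \max\{0,\, D - \tfrac{\delta}{2}\}$, so $\bI(\environment;W) \le \mathcal{R}(D') + o(1)$ and $\bE_{\bP_\environment}[d(\environment,W)] \le D'$, and feed $\widehat{\environment}$ through the channel that applies $W$ on $\{\environment = \widehat{\environment}\}$ and emits $\chi^\star(\widehat{\environment})$ otherwise. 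The extra distortion off the coupling event is at most $(\delta/2)\,d^\star = \delta/2$, so this channel is feasible for $\widehat{\bP}_\environment$ at threshold $D$, whence $\widehat{\mathcal{R}}(D)$ is at most its mutual information; and because the two joint laws agree on $\{\environment = \widehat{\environment}\}$ they are within $\delta$ in $L_1$, so by the Fannes-type bound their mutual informations differ by $O\!\big(\delta\log\tfrac{|\Sigma||\Xi|}{\delta}\big)$. Combining, $\widehat{\mathcal{R}}(D) \le \mathcal{R}(D') + O\!\big(\delta\log\tfrac{|\Sigma||\Xi|}{\delta}\big)$.

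It remains to pass from $\mathcal{R}(D')$ back to $\mathcal{R}(D)$, i.e.\ to bound $\mathcal{R}(D - t) - \mathcal{R}(D)$ for $t = \delta/2$; this is where $D(\bP_\environment)$ enters. Mixing the near-optimal channel at $D$ with $\chi^\star$ at weight $\mu$ gives, by convexity of mutual information in the channel and the zero distortion of $\chi^\star$, the bound $\mathcal{R}((1-\mu)D) \le (1-\mu)\mathcal{R}(D) + \mu\mathcal{R}(0)$, and one quantifies — exactly as in Lemma~2 of \citet{palaiyanur2008uniform}, using that every positive value taken by $d$ on the support of $\bP_\environment$ is at least $D(\bP_\environment)$, together with the entropy-continuity inequality to absorb the resulting $\mu\log(1/\mu)$ term — that $\mathcal{R}(D - t) - \mathcal{R}(D) \le \tfrac{c}{D(\bP_\environment)}\, t\log\tfrac{|\Sigma||\Xi|}{t}$. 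The hypothesis $\delta \le D(\bP_\environment)/4$ is precisely what keeps the relevant mixing weight below $1$ (equivalently, keeps all the $L_1$ perturbations below $\tfrac12$ so the Fannes bound applies). Bookkeeping the three entropy-continuity terms together with this slope term, and using $d^\star = 1$ and $\tfrac{1}{D(\bP_\environment)} \ge 1$ so that the $\tfrac{1}{D(\bP_\environment)}$-scaled term dominates, collapses all constants into the single factor $7$; doing both directions yields $|\mathcal{R}(D) - \widehat{\mathcal{R}}(D)| \le \tfrac{7}{D(\bP_\environment)}\,\delta\log\tfrac{|\Sigma||\Xi|}{\delta}$, as claimed.

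The step I expect to be the real obstacle is the slope bound $\mathcal{R}(D - t) - \mathcal{R}(D) = O\!\big(\tfrac{1}{D(\bP_\environment)}\, t\log\tfrac{1}{t}\big)$, and in particular verifying that $D(\bP_\environment)$ — rather than the global minimum-positive-distortion constant $\widetilde D$ — is the only place such a constant needs to appear, including for small $D$ where the rate-distortion curve is steep. The mutual-information perturbation estimates via the coupling and the Fannes-type inequality are routine; it is this slope estimate that pins the constant $7$ and the near-$0$ behaviour of the curve to $D(\bP_\environment)$, and it is exactly the place where one must confirm the modification to \citet{palaiyanur2008uniform}'s Lemma~2 goes through unchanged.
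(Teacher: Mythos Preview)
Your proposal is correct and takes essentially the same approach as the paper: the paper's proof is a one-line deferral stating that ``the proof of this augmented result still follows the same argument outlined in Section~V of \citet{palaiyanur2008uniform},'' and your plan is precisely to run that argument with $\widetilde{D}$ replaced by $D(\bP_\environment)$ and $d^\star = 1$. You have simply unpacked the coupling-plus-Fannes machinery that the paper leaves implicit, and you have correctly flagged the slope estimate as the one place where the substitution of $D(\bP_\environment)$ for $\widetilde{D}$ needs to be checked.
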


\begin{corollary}[Lemma 5 - \citep{palaiyanur2008uniform}]
For any $\delta \in (0,1),\eps \in (0,\log(\Sigma))$ let $\bP_\environment \in \Delta(\Sigma)$ be the current posterior over $\environment$. Additionally, let $\phi^{-1}$ denote the inverse of the function $\phi: [0,\frac{1}{2}] \ra \bR$ where $\phi(t) = t \log\left(\frac{|\Sigma||\Xi|}{t}\right)$. If $$z \geq \frac{2}{\phi^{-1}\left(\frac{\eps D(\bP_\environment)}{7}\right)^2}\left(\log\left(\frac{1}{\delta}\right) + |\Sigma| \log(2)\right), \text{ then } \bP(|\mc{R}(D) - \widehat{\mc{R}}(D)| \geq \eps) \leq \delta.$$
\end{corollary}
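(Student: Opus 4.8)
The plan is to reduce the high-probability statement about $|\mc{R}(D) - \widehat{\mc{R}}(D)|$ to a standard concentration bound on the total-variation deviation $||\bP_\environment - \widehat{\bP}_\environment||_1$ of the empirical distribution formed from the $z$ posterior draws, using Lemma~\ref{lemma:unicon_rdf} as the deterministic bridge. Write $M = |\Sigma||\Xi|$ and note that, since $M \geq 2$, the function $\phi(t) = t\log(M/t)$ has $\phi'(t) = \log\!\big(M/(te)\big) > 0$ on $[0,\tfrac12]$, so $\phi$ is strictly increasing there and $\phi^{-1}$ is well defined on $[0,\phi(\tfrac12)] = [0,\tfrac12\log(2M)]$. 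Set $t^\star \triangleq \phi^{-1}\!\big(\eps D(\bP_\environment)/7\big)$. The first (routine) step is bookkeeping on the domain: one checks that $\eps D(\bP_\environment)/7$ lies in the domain of $\phi^{-1}$ and that $t^\star \leq D(\bP_\environment)/4$. Both follow from the hypothesis $\eps < \log|\Sigma| \leq \log M$ together with boundedness of the distortion (so $D(\bP_\environment) \leq 1$): indeed $\eps D(\bP_\environment)/7 \leq \tfrac{1}{7}\log M < \tfrac12\log(2M)$, and the inequality $\tfrac{4}{7}\log M \leq \log(4M)$ gives $\eps D(\bP_\environment)/7 \leq (D(\bP_\environment)/4)\log\!\big(4M/D(\bP_\environment)\big) = \phi(D(\bP_\environment)/4)$, whence monotonicity of $\phi$ yields $t^\star \leq D(\bP_\environment)/4$.

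The second step is the deterministic consequence. On the event $\mathcal{G} = \{\,||\bP_\environment - \widehat{\bP}_\environment||_1 \leq t^\star\,\}$ we have $||\bP_\environment - \widehat{\bP}_\environment||_1 \leq D(\bP_\environment)/4$, so Lemma~\ref{lemma:unicon_rdf} applies and gives
$$|\mc{R}(D) - \widehat{\mc{R}}(D)| \;\leq\; \frac{7}{D(\bP_\environment)}\,\phi\big(||\bP_\environment - \widehat{\bP}_\environment||_1\big) \;\leq\; \frac{7}{D(\bP_\environment)}\,\phi(t^\star) \;=\; \eps,$$
where the middle inequality again uses monotonicity of $\phi$ on $[0,\tfrac12]$ and the final equality is the definition of $t^\star$. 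Hence it suffices to prove $\bP(\mathcal{G}^c) \leq \delta$ under the stated lower bound on $z$.

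The third step is the probabilistic core. The $z$ posterior draws are i.i.d.\ from $\bP_\environment$, whose support contains at most $|\Sigma|$ atoms, so by a standard concentration inequality for the $\ell_1$ deviation of an empirical distribution over an alphabet of size $|\Sigma|$ (Weissman et al.),
$$\bP\big(||\bP_\environment - \widehat{\bP}_\environment||_1 \geq t\big) \;\leq\; 2^{|\Sigma|}\exp\!\big(-z t^2/2\big) \qquad \text{for all } t \geq 0.$$
Taking $t = t^\star$ and solving $2^{|\Sigma|}\exp\!\big(-z (t^\star)^2/2\big) \leq \delta$ for $z$ gives precisely $z \geq \tfrac{2}{(t^\star)^2}\big(\log(1/\delta) + |\Sigma|\log 2\big)$, i.e.\ the hypothesized bound with $(t^\star)^2 = \phi^{-1}\!\big(\eps D(\bP_\environment)/7\big)^2$. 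Combining this with the deterministic bound on $\mathcal{G}$ completes the argument.

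The only nontrivial ingredient — and thus the step I expect to be the main obstacle — is invoking the correct concentration inequality for the total-variation distance between a distribution and its empirical counterpart: a McDiarmid/Hoeffding-type argument recovers the $\exp(-zt^2/2)$ rate, but one must carefully account for the $2^{|\Sigma|}$ prefactor arising from the supremum over subsets in the definition of $||\cdot||_1$. Everything else is the deterministic substitution through Lemma~\ref{lemma:unicon_rdf} together with the monotonicity/invertibility bookkeeping for $\phi$, which is exactly where the hypothesis $\eps \in (0,\log|\Sigma|)$ is consumed; following \citet{palaiyanur2008uniform}, none of this departs from their original proof.
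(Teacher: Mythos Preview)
Your proposal is correct and follows essentially the same approach as the paper: reduce the event $\{|\mc{R}(D) - \widehat{\mc{R}}(D)| \geq \eps\}$ to the event $\{||\bP_\environment - \widehat{\bP}_\environment||_1 \geq \phi^{-1}(\eps D(\bP_\environment)/7)\}$ via Lemma~\ref{lemma:unicon_rdf}, then bound the latter with the Weissman et al.\ $\ell_1$ concentration inequality $\bP(||\bP_\environment - \widehat{\bP}_\environment||_1 \geq t) \leq 2^{|\Sigma|}\exp(-zt^2/2)$ and solve for $z$. Your additional bookkeeping on the invertibility of $\phi$ and the check $t^\star \leq D(\bP_\environment)/4$ (needed for Lemma~\ref{lemma:unicon_rdf} to apply) is more careful than what the paper spells out, but the argument is the same.
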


While the results discussed so far provide insight into the Blahut-Arimoto algorithm itself, it is worth examining the extent to which learning targets computed via rate-distortion theory improve over a hand-crafted learning target. While such an improvement is certainly expected based on the theoretical analyses of \citep{russo2018satisficing,arumugam2021deciding}, it has not yet been demonstrated empirically. Due to space constraints, we defer a presentation of such empirical results that elucidate the value of computing learning targets via rate-distortion theory to the appendix. 

\subsection{Blahut-Arimoto Information-Directed Sampling}

While BLASTS is successful in leveraging optimal learning targets, we consider improving upon the exploration technique used for collecting information about those targets through the algorithmic design principle of information-directed sampling~\citep{russo2018learning}. Information-directed sampling (IDS) is an abstract objective for sequential decision-making agents where, at each time period, an agent computes a policy based on the current history $H_t$ that minimizes the information ratio $$\pi_t = \min\limits_{\pi \in \Delta(\mc{A})} \frac{\bE_{\pi}\left[\bE_{\environment}\left[\overline{R}(A^\star) - \overline{R}(A_t) \mid H_t \right] \right]^2}{\bI(A^\star; A_t, O_{t+1} | H_t = H_t)}.$$
Here, $\overline{R}(\cdot)$ denotes the mean reward associated with an input action. In words, the information ratio of a particular policy $\pi \in \Delta(\mc{A})$ weighs the squared expected regret of action $A_t \sim \pi$ relative to the optimal action $A^\star$ against the expected information gain about $A^\star$ by taking $A_t$. Note that the information gain term $\bI(A^\star; A_t, O_{t+1} | H_t = H_t)$ conditions upon the particular realization of the agent's history at time period $t$, $H_t$. Relating this quantity to the more traditional notion of conditional mutual information only requires integrating over the randomness in $H_t$: $\bE\left[\bI(A^\star; A_t, O_{t+1} | H_t = H_t)\right] = \bI(A^\star; A_t, O_{t+1} | H_t)$. For our purposes, we consider a modified information ratio that leverages the output of the Blahut-Arimoto algorithm.

In particular, we define Blahut-Arimoto Information-Directed Sampling (BLAIDS) as a general recipe for efficient exploration with optimal learning targets wherein an agent first uses its current posterior as a source distribution, computing a learning target $\chi$ via the Blahut-Arimoto algorithm. Subsequently, the agent employs IDS to balance optimal behavior and information gain about this target by minimizing the following information ratio
$$\Psi_t(\pi) = \frac{\bE_{\pi}\left[\Delta_t(a) \right]^2}{\bI(\chi; A_t, O_{t+1} | H_t = H_t)} \qquad \Delta_t(a) = \bE_{\environment,\chi}\left[\overline{R}(\chi) - \overline{R}(a) \mid H_t \right],$$
where the expectation is with respect to the joint distribution computed by the Blahut-Arimoto algorithm. Observe that the distortion function used here is given by $d(\environment, \chi \mid H_t) = d(\environment, A_\chi \mid H_t) = \bE[(\overline{R}(A^\star) - \overline{R}(A_\chi))^2\mid \environment, H_t]$ where, conditioned on a realization of the environment $\environment$, we may compute $\overline{R}(\cdot)$~\citep{arumugam2021deciding}. One perspective on BLAIDS is that IDS, with its default focus on $A^\star$, will spend time acquiring too much information relative to BLAIDS that will adaptively guide IDS towards uncovering the minimum number of bits needed to achieve a desired level of sub-optimality. Naturally, BLAIDS is a generalization of IDS since, as $\beta \ra \infty$, the Blahut-Arimoto algorithm computes the optimal action $A^\star$ for each realization of the environment from the input source distribution. Just as with IDS, BLAIDS is a design principle whose abstract objective must be made explicit in a computationally-tractable manner in order to yield a corresponding algorithm. One standard choice for IDS is to leverage a lower bound of mutual information by the variance in rewards~\citep{russo2018learning,nikolov2018information,dwaracherla2020hypermodels,lu2021reinforcement}. This results in an upper bound on the per-period information ratio that is suitable for minimization: $$\Psi_t(\pi) \leq \frac{\bE_{\pi}\left[\Delta_t(A_t) \right]^2}{\bE{\pi}\left[v_t(A_t)\right]} \qquad v_t(A_t) = \bV\left[\bE\left[\overline{R}(A_t)| \chi, H_t\right] | H_t \right].$$
To see how this upper bound lends itself to a computationally-tractable instantiation of IDS, observe that with $z$ posterior samples, the Blahut-Arimoto algorithm results in a distribution over learning targets conditioned on each posterior sample of the environment $p(\chi = \tilde{a} | \environment = e)$. Denoting the induced marginal over learning targets as $q(\tilde{a})$, we have
$$v_t(A_t = a) = \sum\limits_{\tilde{a} \in \mc{A}}q(\tilde{a})\left( \frac{1}{z}\sum\limits_{e} \frac{p(\tilde{a} | e)}{q(\tilde{a})}\bE_t\left[\overline{R}(A_t) |   \chi = a, \environment = e\right] - \frac{1}{z} \sum\limits_{e} \bE_t\left[\overline{R}(A_t) |  \environment = e\right] \right)^2.$$
Using this lower bound, we recover variance-BLAIDS as a computationally-tractable analogue to variance-IDS. Once again, to see that variance-BLAIDS generalizes variance-IDS note that as $\beta \ra \infty$, recovery of the optimal action by the Blahut-Arimoto algorithm implies that for any realization of the environment and any $\tilde{a} \in \mc{A}$, $p(\tilde{a}|\environment = e) \in \{0,1\}$, identifying if $\tilde{a}$ is optimal in that realization of the environment. Consequently, $\sum\limits_{\environment} p(\tilde{a} | \environment) = |\environment_{a^\star=\tilde{a}}|$ and $q(\tilde{a}) = z^{-1}|\environment_{a^\star=\tilde{a}}|$, where $\environment^\star_{a^\star=\tilde{a}}$ denotes a partition of the posterior samples such that $a^\star \in \mc{A}$ is optimal for all environments $e \in \environment_{a^\star=\tilde{a}}$. In the next section, we assess the extent to which variance-BLAIDS improves upon standard variance-IDS through the use of optimal learning targets.

\begin{figure}[H]
\centering
\begin{subfigure}{.33\textwidth}
  \centering
  \includegraphics[width=.9\linewidth]{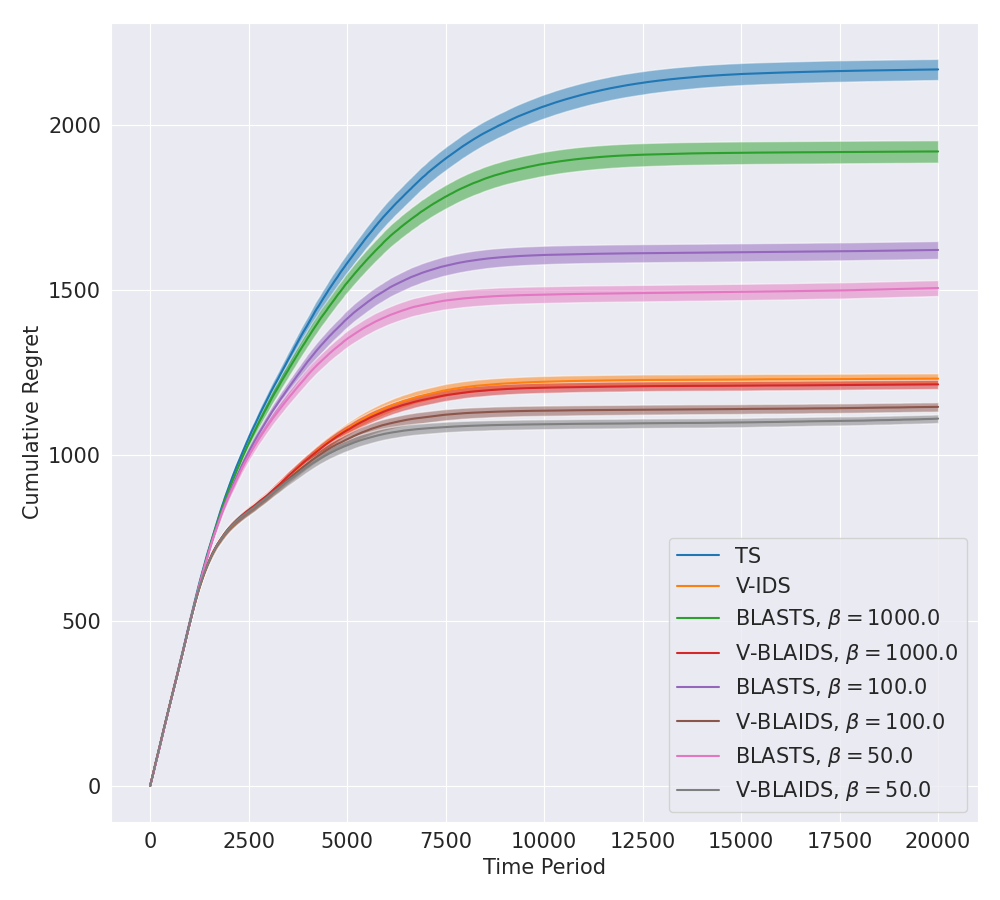}
  \caption{Optimal $\beta$ values}
\end{subfigure}%
\begin{subfigure}{.33\textwidth}
  \centering
  \includegraphics[width=.9\linewidth]{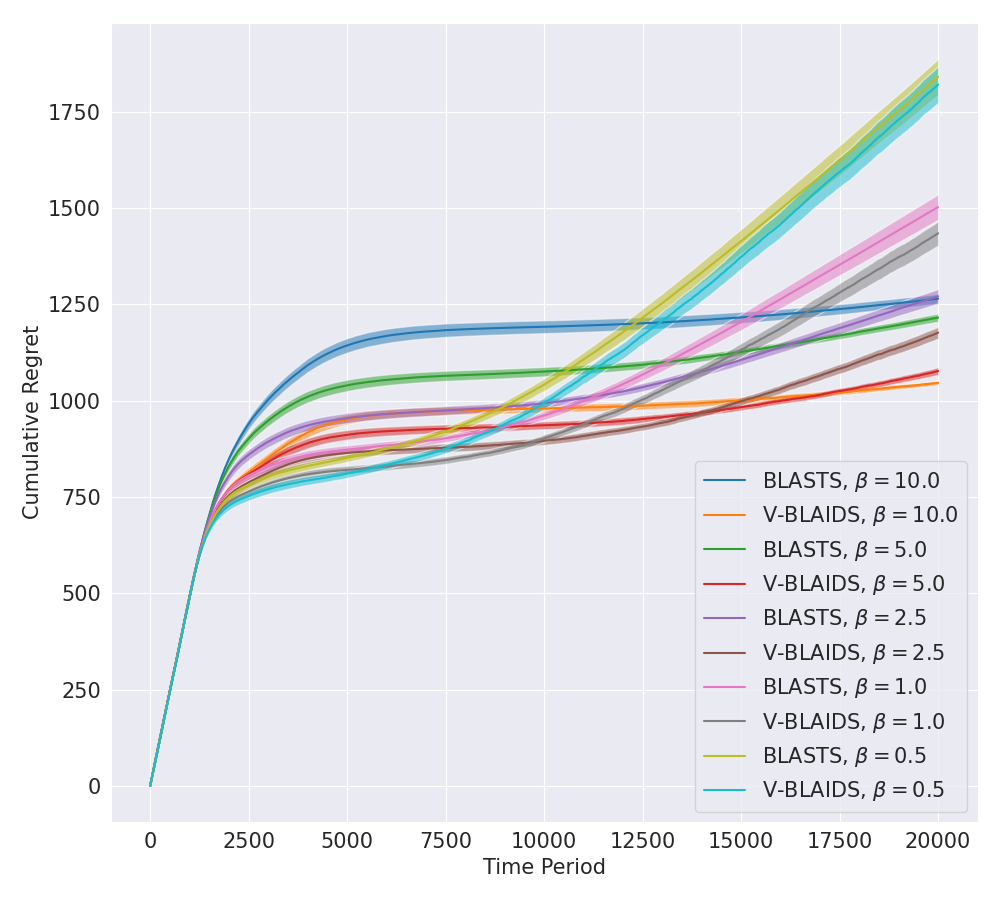}
  \caption{Satisficing $\beta$ values}
\end{subfigure}
\begin{subfigure}{.33\textwidth}
  \centering
  \includegraphics[width=.9\linewidth]{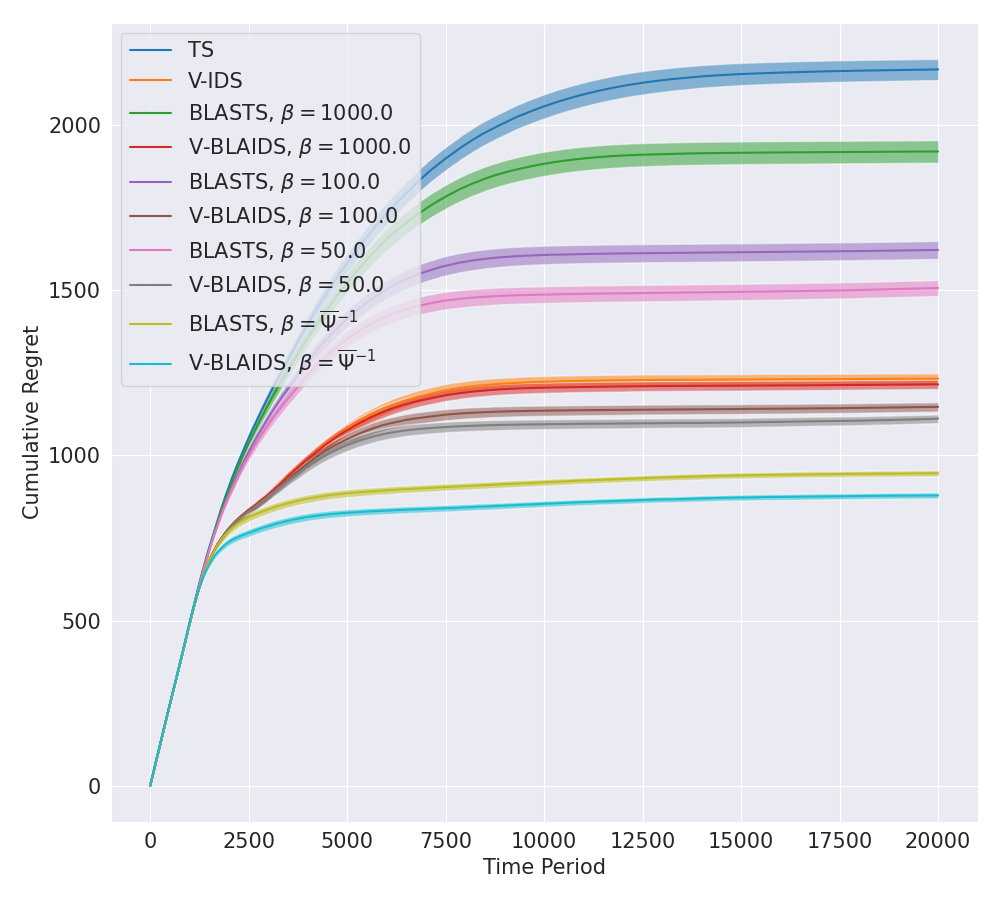}
  \caption{Adaptive $\beta_t = \overline{\Psi}^{-1}$}
\end{subfigure}
\begin{subfigure}{.33\textwidth}
  \centering
  \includegraphics[width=.9\linewidth]{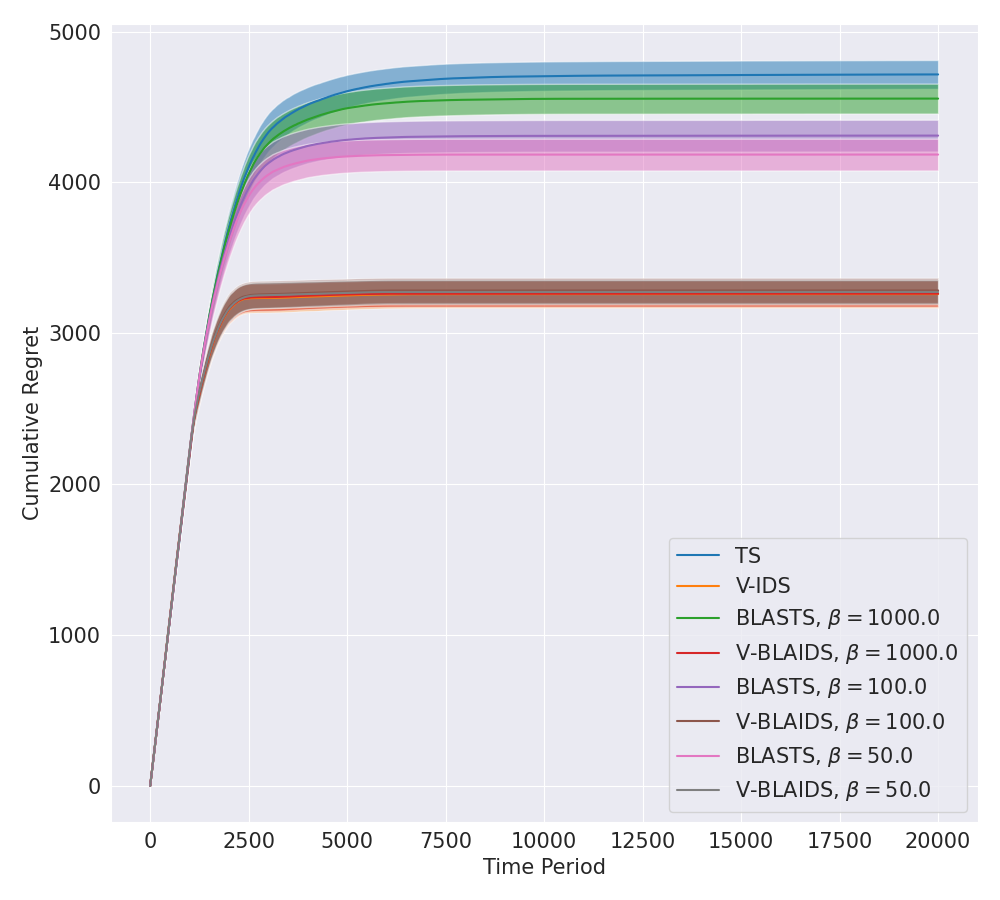}
  \caption{Optimal $\beta$ values}
\end{subfigure}%
\begin{subfigure}{.33\textwidth}
  \centering
  \includegraphics[width=.9\linewidth]{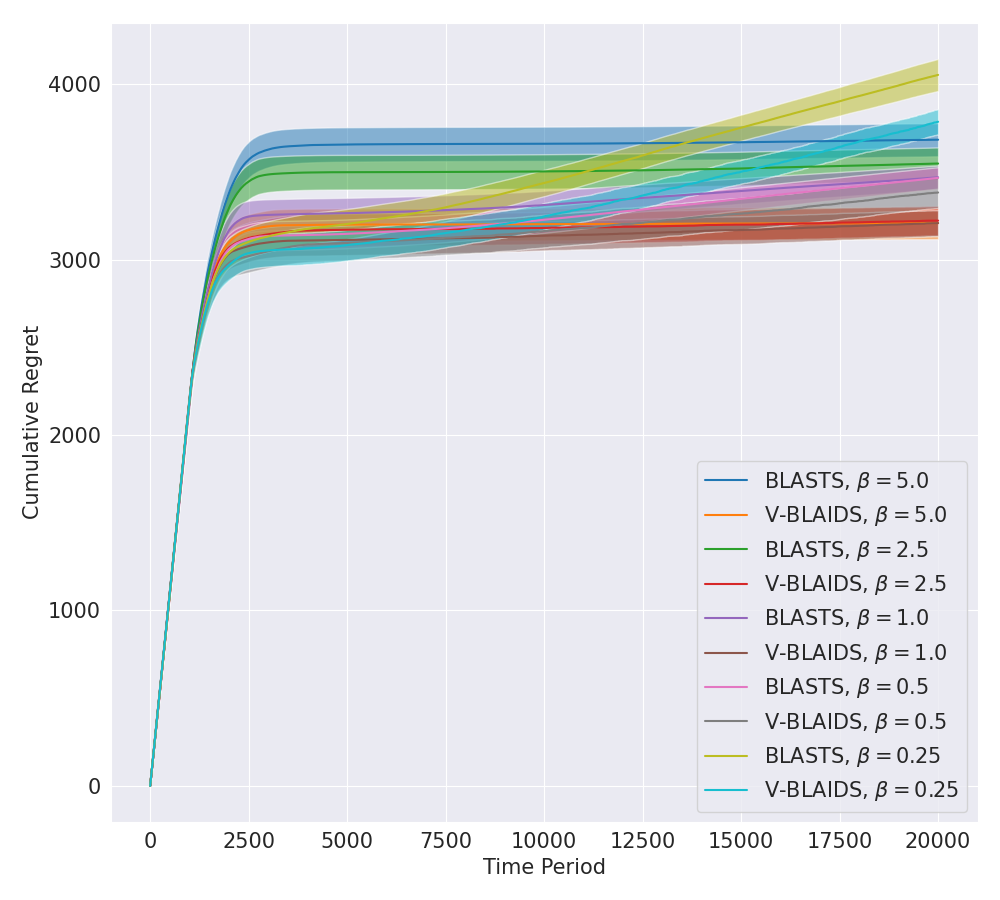}
  \caption{Satisficing $\beta$ values}
\end{subfigure}
\begin{subfigure}{.33\textwidth}
  \centering
  \includegraphics[width=.9\linewidth]{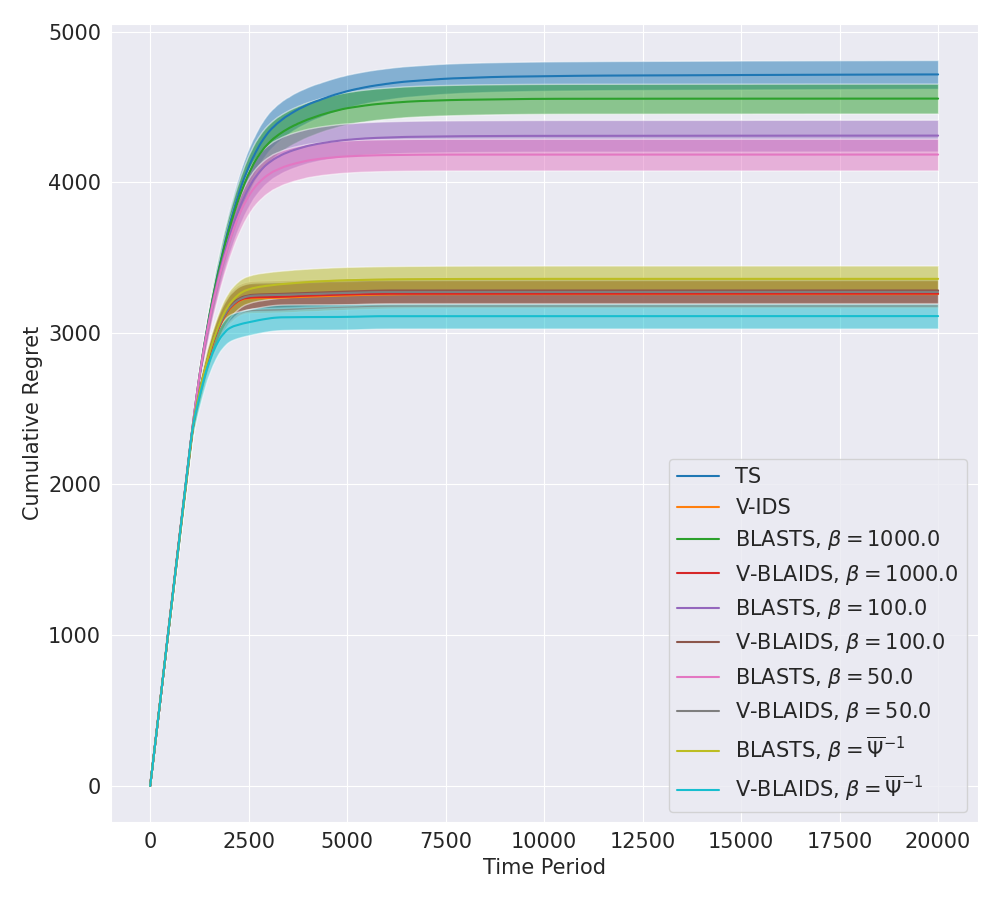}
  \caption{Adaptive $\beta_t = \overline{\Psi}^{-1}$}
\end{subfigure}
\caption{Regret curves for independent Bernoulli (top) and Gaussian (bottom) bandits with 50 arms.}
\label{fig:bern_gauss}
\end{figure}


\section{Experiments}
\label{sec:exps}

In our experimental setup, we examine independent Bernoulli and Gaussian bandit problems, sweeping over values of $\beta$ in order to examine the spectrum of policies induced by the various learning targets. Similar to BLASTS, BLAIDS is agnostic to the particular mechanism used for representing an agent's epistemic uncertainty. For our experiments, we found the use of a linear hypermodel~\citep{dwaracherla2020hypermodels} to be sufficient when optimized via Adam~\citep{kingma2014adam} (learning rate of 0.001) with a noise variance of 0.1, prior variance of 1.0, and batch size of 1024. We adopt all other hyperparameters of \citep{arumugam2021deciding} for running the Blahut-Arimoto algorithm~\citep{dit}\footnote{\url{https://github.com/dit/dit}} in each time period and run experiments via Google Colab with a GPU. In Figure \ref{fig:bern_gauss}, we report the cumulative regret in each time period across Thompson Sampling (TS), BLASTS, variance-IDS (V-IDS), and variance-BLAIDS (V-BLAIDS) where shading denotes $95\%$ confidence intervals averaged over 10 random trials. For clarity, we split the figures in each problem between those values of $\beta$ which manage to achieve an optimal policy and those which learn a sub-optimal policy.

Our empirical results highlight opportunities for more efficient learning in light of information-directed exploration. Notably, the gap between BLASTS and variance-BLAIDS widens as $\beta$ increases, asserting more importance in pursuing a near-optimal learning target. Curiously, while the Bernoulli bandit does show small performance gains relative to variance-IDS when lowering $\beta$, such improvements do not appear in the Gaussian setting and present an interesting direction for future work to investigate the nature of information structures where learning an optimal policy can be improved through automated learning targets. That said, it is worth noting that the prior variance used in our experiments already matches the true variance of the environment and so, under a less-calibrated prior, there might be greater opportunity for variance-BLAIDS to shine over variance-IDS.

As in \citet{arumugam2021deciding}, we also entertain the idea of an intimate relationship between $\beta$ and the (best) information ratio $\overline{\Psi}_t = \min\limits_{\pi \in \Delta(\mc{A})} \frac{\bE_{\pi}\left[\Delta_t(a) \right]^2}{\bI(A^\star; A_t, O_{t+1} | H_t = H_t)}$. We also find that the use of an adaptive $\beta_t = \overline{\Psi}_t^{-1}$ yields the best performance in both problem classes for learning an optimal policy. While the existence of a meaningful correspondence between $\beta$ and $\overline{\Psi}_t$ seems natural given their respective roles and aligned units (bits per squared unit of regret for the former and squared units of regret per bit for the latter), it is unclear if and when this scaling is correct. On the whole, these results serve as simple, fundamental sanity checks that confirm the importance of prudent information acquisition, even when in pursuit of a learning target besides the optimal action.

\section{Conclusion}
\label{sec:conc}

Learning targets are a general mechanism for guiding the information sought out by a sequential decision-making agent. As environments continue to grow in complexity so as to meet the demands of more challenging real-world tasks, so too will the need for learning targets that fall within the means of computationally-bounded agents. This work falls in with a new direction that leverages rate-distortion theory in order to forge such optimal learning targets and asks how such targets couple with existing methods for optimal information acquisition. Our work confirms that the computationally-tractable approaches for computing optimal learning targets are compatible with the well-studied principle of information-directed sampling. While we have found suitable answers for the multi-armed bandit setting, much remains to be understood about the nature of these learning targets themselves as well as how to extend these algorithms to handle target policies and other richer notions of learning objectives.

\begin{ack}
Financial support from Army Research Office (ARO) grant W911NF2010055 is gratefully acknowledged.
\end{ack}

\bibliographystyle{plainnat}
\bibliography{references}


\appendix

\section{Related Work}
\label{sec:rw}

This work primarily focuses on the design of sample-efficient sequential decision-making agents through principled methods of information acquisition and information representation~\citep{lu2021reinforcement}. While the use of information theory, and in particular rate-distortion theory, has been explored for the latter~\citep{abel2019state,dong2018information}, this work aims to advance understanding of the role that rate-distortion theory plays in addressing the former challenge; we additionally hope that future work may further capitalize on these insights and techniques for information representation as well. 

The challenge of efficient information acquisition is intimately tied to the exploration-exploitation trade-off that quintessentially appears in multi-armed bandit problems~\citep{bubeck2012regret,lattimore2020bandit}. Foremost among the various principled approaches for delicately negotiating this trade-off is the algorithmic design principle known as information-directed sampling (IDS)~\citep{russo2018learning}. IDS centers around a fundamental quantity known as the information ratio: the ratio in each time period of the squared expected regret and the expected information gain. The complimentary design principle naturally prescribed by this quantity is for an agent to, in each timestep, compute the policy that minimizes this information ratio. Accompanying the algorithmic simplicity that IDS espouses are the simple yet elegant theoretical analyses that it facilitates, which have been extended and adapted in numerous works~\citep{lattimore2019information,zimmert2019connections,bubeck2020first,lattimore2020mirror,kirschner2020asymptotically}. While IDS has been extensively studied and the information ratio has been refined and generalized in numerous ways, there has been an almost exclusive focus on measuring information gained about the optimal action. While optimal actions are the prominent object of study throughout the sequential decision-making literature, the challenges of sample-efficient learning when faced with the complexity and scale of the real world motivate the need for fine-grained control over more generic notions of learning targets. This fact has been recognized by previous works~\citep{russo2018satisficing,lu2021reinforcement} which identify the utility of rate-distortion theory in overcoming these challenges, but offer no concrete algorithmic mechanisms for leveraging it fruitfully. Meanwhile, \citet{arumugam2021deciding} succeed in addressing the algorithmic hurdle but only examine and analyze action selection via Thompson Sampling; our work closes this natural gap and offers a new generalization of IDS that accommodates effective information acquisition about alternative learning targets.

Finally, we note that the Blahut-Arimoto algorithm~\citep{blahut1972computation,arimoto1972algorithm} proves to be a critical algorithmic tool for enjoying the practical virtues of rate-distortion theory in sequential decision-making problems. The algorithm itself has been a popular object of study for its utility in the information-theory community as well as its efficacy as an alternating optimization algorithm~\citep{boukris1973upper,rose1994mapping,sayir2000iterating,matz2004information,niesen2007adaptive,vontobel2008generalization,naja2009geometrical,yu2010squeezing}. While we do not explore any extensions of the Blahut-Arimoto algorithm in this work due to their focus on improving computational efficiency, practitioners may find these computational advantages meaningful and potentially necessary when implementing and deploying BLAIDS for real-world applications.

\section{Background \& Notation}
\label{sec:back}

In this section, we begin with an overview of several standard quantities in information theory as well as some useful facts. For more background on information theory, see \citet{cover2012elements,gray2011entropy}. All random variables are defined on a probability space $(\Omega,\bF,\bP)$. For any random variable $X: \Omega \ra \mc{X}$ taking values on a measurable space $(\mc{X}, \bX)$, we denote the associated (marginal) distribution of $X$ as $$\bP_{X}(A) = \bP(\inv{X}(A)) = \bP(\{\omega \mid X(\omega) \in A\}) \qquad \forall A \in \bX.$$ We adopt analogous conventions for the joint and (regular) conditional probability measures, $\bP_{X,Y},\bP_{X | Y}$, with respect to another random variable $Y$. For three random variables $X$; $Y$; and $Z$, we define entropy; conditional entropy; mutual information; and conditional mutual information as follows:
\begin{align*}
    \bH(X) &= -\bE[\log(\bP_{X}(\cdot))] \\
    \bH(Y|X) &= -\bE[\log(\bP_{Y|X}(\cdot))] \\
    \bI(X;Y) &= \bH(X) - \bH(X | Y) = \bH(Y) - \bH(Y | X) \\
    \bI(X;Y|Z) &= \bH(X|Z) - \bH(X | Y,Z) = \bH(Y|Z) - \bH(Y | X,Z) \\
\end{align*}

It is also useful to note another definition of mutual information through the Kullback-Leibler (KL) divergence: $$\bI(X;Y) = \kl{\bP_{X,Y}}{\bP_X \times \bP_Y} \qquad \kl{\bP}{\bQ} = \begin{cases} \int \log\left(\frac{d\bP}{d\bQ}\right) d\bP & \bP \ll \bQ \\ \infty & \bP \not\ll \bQ \end{cases},$$ where $\frac{d\bP}{d\bQ}$ denotes the Radon-Nikodym derivative of probability measure $\bP$ with respect to $\bQ$, with both measures defined on the same measurable space. Note that $\bP_X \times \bP_Y$ denotes the product measure over the associated marginals.

Consider a random variable $X:\Omega \ra \mc{X}$ taking values on the measurable space $(\mc{X}, \bX)$ with an associated marginal distribution $\bP_X$ that represents an information source. Similarly, define the random variable $\widehat{X}:\Omega \ra \widehat{\mc{X}}$ that takes values on $(\widehat{\mc{X}}, \widehat{\bX})$ and corresponds to a channel output. Given a known, measurable distortion measure $d:\mc{X} \times \widehat{\mc{X}} \mapsto \bR_{\geq 0}$ and a desired upper bound on distortion $D$, the rate-distortion function is defined as:
\begin{align*}
    \mc{R}(D) &= \inf\limits_{\bP_{X,\widehat{X}} \in \Lambda} \mc{I}(X;\widehat{X})
\end{align*}
quantifying the minimum number of bits (on average) that must be communicated from $X$ across a channel in order to adhere to the specified expected distortion threshold $D$. Here, the infimum is taken over $\Lambda = \{\bP_{X,\widehat{X}} \mid \forall A \in \bX: \bP_{X,\widehat{X}}(A \times \widehat{\mc{X}}) = \bP_{X}(A) \text{ and } \bE[d(X,\widehat{X})] \leq D\}$ representing the set of all joint distributions on the product space $(X \times \widehat{X}, \bX \times \widehat{\bX})$ whose corresponding marginal distribution on $X$ matches the original information source $\bP_X$ while also satisfying the constraint on bounded expected distortion. Intuitively, a higher rate corresponds to requiring more bits of information and smaller information loss between $X$ and $\widehat{X}$, enabling higher-fidelity reconstruction (lower distortion); conversely, lower rates reflect more substantial information loss, potentially exceeding the tolerance on distortion $D$. 

We will make use of the following facts:

\begin{fact}[Chain Rule for Radon-Nikodym Derivatives - Lemma 6.6 \citep{gray2009probability}]
Let $\mu, \nu, \rho$ be $\sigma$-finite probability measures on $(\Omega,\bF)$. If $\nu \ll \mu$ and $\mu \ll \rho$ with corresponding Radon-Nikodym derivatives $\frac{d\nu}{d\mu},\frac{d\mu}{d\rho} < \infty$, then $\nu \ll \rho$ and has an associated Radon-Nikodym derivative $\rho$-a.s.
$$\frac{d\nu}{d\rho} =  \frac{d\nu}{d\mu}\frac{d\mu}{d\rho}.$$
\label{fact:rn_chain}
\end{fact}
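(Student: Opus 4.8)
The plan is to establish the claim in two stages: first confirm that $\nu \ll \rho$ so that the density $\frac{d\nu}{d\rho}$ is well-defined, and then verify that the candidate density $\frac{d\nu}{d\mu}\frac{d\mu}{d\rho}$ reproduces $\nu$ when integrated against $\rho$, so that uniqueness of Radon-Nikodym derivatives forces equality $\rho$-almost surely.

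For the first stage, I would argue directly: if $A \in \bF$ satisfies $\rho(A) = 0$, then $\mu(A) = 0$ because $\mu \ll \rho$, and in turn $\nu(A) = 0$ because $\nu \ll \mu$; hence $\nu \ll \rho$. Since $\nu$ and $\rho$ are probability measures (hence $\sigma$-finite), the Radon-Nikodym theorem yields a non-negative, $\rho$-a.s. finite density $\frac{d\nu}{d\rho}$, unique up to $\rho$-null sets. I would also note in passing that the hypothesized a.s. finiteness of $\frac{d\nu}{d\mu}$ and $\frac{d\mu}{d\rho}$ is automatic here, since $\nu$ and $\mu$ are finite measures and their densities are therefore integrable, hence a.s. finite.

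For the second stage, the key sub-step is the substitution identity
$$\int_\Omega g\, d\mu = \int_\Omega g\,\frac{d\mu}{d\rho}\, d\rho$$
for every non-negative measurable $g:\Omega \ra \bR_{\geq 0}$. I would prove this by the standard bootstrap over increasingly general integrands: it holds for indicators $g = \indic_B$ since both sides equal $\mu(B)$ by definition of $\frac{d\mu}{d\rho}$; it extends to non-negative simple functions by linearity of the integral; and it extends to arbitrary non-negative measurable $g$ by writing $g$ as an increasing pointwise limit of simple functions and invoking the monotone convergence theorem on both sides. Applying this identity with $g = \indic_A \frac{d\nu}{d\mu}$, together with the defining property $\nu(A) = \int_A \frac{d\nu}{d\mu}\, d\mu$, gives
$$\nu(A) = \int_A \frac{d\nu}{d\mu}\, d\mu = \int_A \frac{d\nu}{d\mu}\,\frac{d\mu}{d\rho}\, d\rho \qquad \text{for all } A \in \bF.$$
Since $\frac{d\nu}{d\mu}\frac{d\mu}{d\rho}$ is non-negative, measurable, and reproduces $\nu$ against $\rho$, it is \emph{a} Radon-Nikodym derivative of $\nu$ with respect to $\rho$, and the uniqueness clause of the theorem gives $\frac{d\nu}{d\rho} = \frac{d\nu}{d\mu}\frac{d\mu}{d\rho}$ $\rho$-a.s.

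The only place demanding any care --- the ``hard part,'' modest as it is --- is the monotone-class/bootstrap argument for the substitution identity; everything else is immediate from absolute continuity and uniqueness of densities. An alternative that avoids the explicit bootstrap is to cite the general change-of-measure lemma for Radon-Nikodym derivatives directly, but reproducing the short argument above keeps the appendix self-contained.
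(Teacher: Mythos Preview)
Your argument is correct and is the standard proof of the chain rule for Radon-Nikodym derivatives. Note, however, that the paper does not actually supply its own proof of this statement: it is recorded as a \emph{Fact} with a citation to \citep{gray2009probability} and is invoked without proof in the derivation of Lemma~\ref{lemma:ba_ineqs}. So there is no in-paper argument to compare against; your write-up would simply add a self-contained justification where the paper chose to cite the literature.
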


\begin{fact}[Gibb's Inequality - \citep{cover2012elements}]
For any two probability measures $\bP,\bQ$ on $(\Omega,\bF)$, $$\kl{\bP}{\bQ} \geq 0.$$
\label{fact:gibbs}
\end{fact}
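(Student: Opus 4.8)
The plan is to argue directly from the definition of $\kl{\bP}{\bQ}$ recorded above, splitting into two cases according to whether $\bP \ll \bQ$. If $\bP \not\ll \bQ$, then by definition $\kl{\bP}{\bQ} = \infty \geq 0$ and there is nothing to prove, so the only real work is the absolutely continuous case.

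Assume then $\bP \ll \bQ$ and set $f = \frac{d\bP}{d\bQ}$, so that $\kl{\bP}{\bQ} = \int \log(f)\, d\bP$. First I would record the null-set observation $\bP(\{f = 0\}) = \int_{\{f = 0\}} f\, d\bQ = 0$, which shows $f > 0$ holds $\bP$-almost everywhere; this is what lets us form $\log(1/f)$ and change variables by $d\bP = f\, d\bQ$ on $\{f > 0\}$. Next I would invoke the elementary inequality $\log t \leq t - 1$, valid for all $t > 0$ --- equivalently $-\log(1/f) \geq 1 - 1/f$ --- integrated against $\bP$, to obtain
\[
\kl{\bP}{\bQ} = -\int \log\!\left(\tfrac{1}{f}\right) d\bP \;\geq\; \int\!\left(1 - \tfrac{1}{f}\right) d\bP = 1 - \int_{\{f > 0\}} \tfrac{1}{f}\, f\, d\bQ = 1 - \bQ(\{f > 0\}) \;\geq\; 0,
\]
using $\bQ(\{f > 0\}) \leq \bQ(\Omega) = 1$ at the last step. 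An equivalent route uses Jensen's inequality for the concave map $\log$ under the probability measure $\bP$: $-\kl{\bP}{\bQ} = \bE_{\bP}[\log(1/f)] \leq \log \bE_{\bP}[1/f] = \log \bQ(\{f>0\}) \leq 0$. Either derivation is only a few lines.

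I do not expect a genuine obstacle here --- this is a textbook fact \citep{cover2012elements} --- so the plan is really just to be careful about measure-theoretic bookkeeping: verifying that the integral defining $\kl{\bP}{\bQ}$ is well-posed in $(-\infty, +\infty]$ (the negative part $\log^-(f)$ is $\bP$-integrable because $\log^-(f) \leq (1/f - 1)^+$ and $\int (1/f)\, d\bP = \bQ(\{f>0\}) \leq 1$), and that the pointwise inequality may legitimately be integrated by monotonicity of the integral. The equality case --- $\bP = \bQ$, coming from strictness of $\log t < t - 1$ for $t \neq 1$ --- is not needed for the statement but would fall out of the same argument should a later result require it.
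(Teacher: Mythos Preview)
Your argument is correct and is the standard textbook derivation. The paper, however, does not supply a proof of this fact at all: it is recorded in the appendix as a background \emph{Fact} with a citation to \citet{cover2012elements} and is simply invoked where needed (in the proof of Lemma~\ref{lemma:ba_ineqs}). So there is nothing to compare against beyond noting that your write-up is more than the paper itself provides.
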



\begin{fact}[Mutual Information as an Infimum over Product Measures - Corollary 7.12 \citep{gray2011entropy}]
Let $\bP_{X,Y}$ be a joint probability measure and let $\mc{M}$ denote the collection of all product measures $M_X \times M_Y$. Then $$\bI(X;Y) = \inf\limits_{M_X \times M_Y \in \mc{M}} \kl{\bP_{X,Y}}{M_X \times M_Y}.$$
\label{fact:mi_inf}
\end{fact}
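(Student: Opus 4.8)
The plan is to establish the two inequalities separately, using only the identity $\bI(X;Y) = \kl{\bP_{X,Y}}{\bP_X \times \bP_Y}$ recorded in Section~\ref{sec:back} together with Facts~\ref{fact:rn_chain} and~\ref{fact:gibbs}. The bound $\inf_{M_X \times M_Y \in \mc{M}} \kl{\bP_{X,Y}}{M_X \times M_Y} \le \bI(X;Y)$ is immediate, since $\bP_X \times \bP_Y$ itself belongs to $\mc{M}$ and evaluates the objective to $\kl{\bP_{X,Y}}{\bP_X \times \bP_Y} = \bI(X;Y)$ (so in fact the infimum is attained). The content is therefore the reverse inequality: for an arbitrary product measure $M_X \times M_Y$, I would derive the ``compensation identity''
\begin{align*}
    \kl{\bP_{X,Y}}{M_X \times M_Y} = \bI(X;Y) + \kl{\bP_X}{M_X} + \kl{\bP_Y}{M_Y},
\end{align*}
after which the claim follows because the last two terms are non-negative by Gibb's inequality (Fact~\ref{fact:gibbs}).

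To prove the identity I would reduce to the case $\kl{\bP_{X,Y}}{M_X \times M_Y} < \infty$ (otherwise there is nothing to show), so that $\bP_{X,Y} \ll M_X \times M_Y$. Projecting onto each coordinate forces $\bP_X \ll M_X$ and $\bP_Y \ll M_Y$: if $M_X(A) = 0$ then $(M_X \times M_Y)(A \times \mc{Y}) = 0$, hence $\bP_X(A) = \bP_{X,Y}(A \times \mc{Y}) = 0$, and symmetrically in the other coordinate. Writing $N_X = d\bP_X / dM_X$ and $N_Y = d\bP_Y / dM_Y$, the Radon--Nikodym derivative of $\bP_X \times \bP_Y$ with respect to $M_X \times M_Y$ is the product $(x,y) \mapsto N_X(x) N_Y(y)$; since $N_X > 0$ holds $\bP_X$-a.s. and $N_Y > 0$ holds $\bP_Y$-a.s., a short argument shows $\bP_{X,Y}$ puts zero mass on $\{N_X N_Y = 0\}$, which gives $\bP_{X,Y} \ll \bP_X \times \bP_Y$. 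The chain rule for Radon--Nikodym derivatives (Fact~\ref{fact:rn_chain}) then yields, $\bP_{X,Y}$-almost surely,
\begin{align*}
    \frac{d\bP_{X,Y}}{d(M_X \times M_Y)} = \frac{d\bP_{X,Y}}{d(\bP_X \times \bP_Y)} \cdot N_X(x) \cdot N_Y(y).
\end{align*}
Taking logarithms, integrating against $\bP_{X,Y}$, and using that $\log N_X$ and $\log N_Y$ are functions of $x$ alone and $y$ alone respectively --- so that their integrals against $\bP_{X,Y}$ collapse to integrals against the marginals $\bP_X$ and $\bP_Y$ --- produces the three-term decomposition, with the leading term equal to $\kl{\bP_{X,Y}}{\bP_X \times \bP_Y} = \bI(X;Y)$.

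The main obstacle I anticipate is not conceptual but the measure-theoretic bookkeeping in this last step, specifically the splitting of the integral of the sum $\log(d\bP_{X,Y}/d(\bP_X \times \bP_Y)) + \log N_X + \log N_Y$ against $\bP_{X,Y}$ into three separate integrals without an ill-defined $\infty - \infty$. The clean way around this is to recall that each of $\kl{\bP_{X,Y}}{\bP_X \times \bP_Y}$, $\kl{\bP_X}{M_X}$, and $\kl{\bP_Y}{M_Y}$ is a well-defined element of $[0,\infty]$ (the negative part of every log-density is integrable against the relevant measure), so the decomposition holds as an identity in $[0,\infty]$; the assumed finiteness of the left-hand side then forces all three summands to be finite. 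Granting this, the reverse inequality and hence the stated equality follow at once, and tracking the equality case confirms the infimum is achieved by $M_X = \bP_X$, $M_Y = \bP_Y$.
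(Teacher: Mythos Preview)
The paper does not prove this statement: it is recorded as a background Fact cited from \citet{gray2011entropy} and invoked without argument. Your proposal is correct and follows the standard route via the compensation identity plus Gibb's inequality; the measure-theoretic bookkeeping you flag (integrability of the negative parts so that the log splits into three well-defined $[0,\infty]$-valued summands) is exactly how one handles it. Worth noting: what the paper actually \emph{uses}, in its proof of Lemma~\ref{lemma:ba_ineqs}, is the identity $\kl{\bP_{\environment,\chi}}{\bP_\environment \times \bQ_\chi} = \bI(\environment;\chi) + \kl{\bP_\chi}{\bQ_\chi}$ with the $\environment$-marginal held at $\bP_\environment$ --- precisely your compensation identity specialized to $M_X = \bP_X$ --- so your derivation in fact supplies exactly what is needed downstream.
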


\begin{fact}[Uniform Continuity of Entropy - \citep{cover2012elements}]
Let $\bP,\bQ$ be two discrete probability distributions (that is, probability mass functions) on the same measurable space $(\mc{X}, \bX)$. If $$||\bP - \bQ||_1 \leq \frac{1}{2},$$ then $$|\bH(\bP) - \bH(\bQ)| \leq ||\bP - \bQ||_1 \log\left(\frac{|\mc{X}|}{||\bP - \bQ||_1}\right).$$
\label{fact:unicont_entropy}
\end{fact}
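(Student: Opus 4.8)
The plan is to reduce the global claim to a one-symbol modulus-of-continuity estimate for the scalar entropy term $\phi(t) = -t\log t$ (with the convention $\phi(0) := 0$), and then sum the per-symbol bounds via a standard maximum-entropy argument. Throughout write $\nu = ||\bP - \bQ||_1 = \sum_{x \in \mc{X}} |\bP(x) - \bQ(x)| \le \frac{1}{2}$ and $n = |\mc{X}|$, and assume $\nu > 0$, since the case $\nu = 0$ is immediate.

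\textbf{Step 1 (scalar lemma).} I would first establish that for all $a, b \in [0,1]$ with $|a - b| \le \frac{1}{2}$ one has $|\phi(a) - \phi(b)| \le \phi(|a-b|)$. Setting $c = |a - b| \in [0, \frac{1}{2}]$ and, without loss of generality, $a = b+c$ with $b \in [0, 1-c]$, consider $g(b) := \phi(b+c) - \phi(b)$. Because $\phi$ is strictly concave on $(0,1]$ its derivative is strictly decreasing, so $g'(b) = \phi'(b+c) - \phi'(b) < 0$ and $g$ is decreasing on $[0, 1-c]$; hence $-\phi(1-c) = g(1-c) \le g(b) \le g(0) = \phi(c)$, giving $|\phi(a) - \phi(b)| \le \max\{\phi(c),\phi(1-c)\}$. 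It then remains to show $\phi(1-c) \le \phi(c)$ for $c \in [0,\frac{1}{2}]$ — this is exactly where the hypothesis $|a-b|\le\frac{1}{2}$ is used — which I would prove by noting that $\psi(c) := \phi(c) - \phi(1-c)$ is concave on $[0,\frac{1}{2}]$ (its second derivative is negative on $(0,\frac{1}{2})$) and vanishes at both endpoints, so $\psi \ge 0$ there. This yields $|\phi(a)-\phi(b)| \le \phi(|a-b|)$.

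\textbf{Step 2 (aggregation).} The triangle inequality gives $|\bH(\bP) - \bH(\bQ)| \le \sum_x |\phi(\bP(x)) - \phi(\bQ(x))| \le \sum_x \phi(|\bP(x) - \bQ(x)|)$, where the scalar lemma applies term-by-term because $|\bP(x) - \bQ(x)| \le \nu \le \frac{1}{2}$ for every $x$. Writing $t_x = |\bP(x) - \bQ(x)|$ with $\sum_x t_x = \nu$ and expanding $\log t_x = \log\nu + \log(t_x/\nu)$ yields $\sum_x \phi(t_x) = -\nu\log\nu + \nu\,\bH(\{t_x/\nu\}_{x\in\mc{X}})$. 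Since $\{t_x/\nu\}_{x}$ is a probability vector on at most $n$ symbols, its entropy is at most $\log n$, so $\sum_x \phi(t_x) \le -\nu\log\nu + \nu\log n = \nu\log(n/\nu)$. Substituting $\nu = ||\bP-\bQ||_1$ and $n = |\mc{X}|$ gives exactly the stated bound.

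\textbf{Main obstacle.} The only genuinely delicate point is Step 1, and specifically the comparison $\phi(1-c) \le \phi(c)$ for $c \le \frac{1}{2}$: since $\phi$ is not monotone on $[0,1]$ (it peaks at $1/e$) one cannot argue by position relative to the maximizer, and a short convexity/endpoint argument for $\psi$ is what makes the constant in the final bound tight (a cruder $\max\{\phi(c),\phi(1-c)\} \le \phi(c)+\phi(1-c)$ would leak an extra additive $O(\nu)$ term). Everything else — concavity and the subadditivity-type properties of $\phi$, the triangle inequality, and the bound $\bH \le \log n$ — is routine.
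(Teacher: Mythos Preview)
Your proof is correct. Note, however, that the paper does not actually supply a proof of this statement: it is listed as a background Fact with a citation to \citet{cover2012elements} and is never revisited in the Technical Proofs appendix. So there is no ``paper's own proof'' to compare against.

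That said, your argument is essentially the classical one that appears in the cited reference: establish the scalar modulus-of-continuity bound $|\phi(a)-\phi(b)| \le \phi(|a-b|)$ for $\phi(t)=-t\log t$ and $|a-b|\le \frac{1}{2}$, then sum coordinate-wise and bound the resulting $\sum_x \phi(t_x)$ via the maximum-entropy inequality $\bH(\{t_x/\nu\}) \le \log|\mc{X}|$. Your handling of the delicate point---showing $\phi(1-c)\le\phi(c)$ on $[0,\frac{1}{2}]$ via concavity of $\psi(c)=\phi(c)-\phi(1-c)$ together with $\psi(0)=\psi(\frac{1}{2})=0$---is clean and avoids the trap of trying to argue from monotonicity of $\phi$ (which fails since $\phi$ peaks at $1/e$). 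The aggregation step is also correctly justified: each $|\bP(x)-\bQ(x)| \le \nu \le \frac{1}{2}$, so the scalar lemma applies term-by-term.
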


\section{Technical Proofs}

\begin{lemma}[Lemma 1.2 \citep{csiszar1974extremum}]
\begin{align*}
    \mc{R}(D) = \max\limits_{\beta \geq 0} \mc{F}(\beta) - \beta D
\end{align*}
The value of $\beta$ that achieves this maximum is characterized as being \textit{associated} with the distortion threshold $D$. Conversely, a joint distribution $\bP_{\environment, \chi}$ that achieves the infimum of $\mc{F}(\beta)$ has an associate rate $\mc{R}(D) = \bI(\bP_{\environment, \chi})$ where $\beta$ is associated with $D = \bE_{\bP_{\environment, \chi}}[d(\environment, \chi)] \triangleq \int d(\environment, \chi) d\bP_{\environment, \chi}(\environment, \chi)$.
\label{lemma:rd_slope}
\end{lemma}
\begin{proof}
The correspondence between values of $\beta$ and values of $D$ is a geometric argument. We have that for any fixed $D \geq 0$ and for all $\beta \geq 0$,
\begin{align*}
    \mc{R}(D) &= \inf\limits_{\bP_{\environment, \chi} \in \Lambda : \bE\left[d(\environment, \chi)\right] \leq D} \bI(\bP_{\environment, \chi}) \\
    &= \inf\limits_{\bP_{\environment, \chi} \in \Lambda : \bE\left[d(\environment, \chi)\right] \leq D} \left( \bI(\bP_{\environment, \chi}) + \beta \bE_{\bP_{\environment, \chi}}[d(\environment, \chi)]  - \beta \bE_{\bP_{\environment, \chi}}[d(\environment, \chi)] \right) \\
    &\geq \inf\limits_{\bP_{\environment, \chi} \in \Lambda : \bE\left[d(\environment, \chi)\right] \leq D} \left( \bI(\bP_{\environment, \chi}) + \beta \bE_{\bP_{\environment, \chi}}[d(\environment, \chi)] \right) - \beta D \\
    &\geq \inf\limits_{\bP_{\environment, \chi} \in \Lambda} \left( \bI(\bP_{\environment, \chi}) + \beta \bE_{\bP_{\environment, \chi}}[d(\environment, \chi)] \right) - \beta D \\
    &= \mc{F}(\beta) - \beta D
\end{align*}
This inequality coupled with the fact that $\mc{R}(D)$ is convex implies the existence of a straight line with slope $\beta_D$ passing through the point $(D,\mc{R}(D))$ such that $\mc{F}(\beta) - \beta_D D = \mc{R}(D)$ with no points above the rate-distortion curve. Consequently, the notion of a particular $\beta_D$ associated with $D$ implies that for all $D'$, $$\mc{R}(D') + \beta_D D' \geq \mc{R}(D) + \beta_D D.$$

To see that this lower bound is achieved by the $\beta_D$ value associated with a given threshold $D$, let $\bP_{\environment,\chi}$ be a joint distribution with expected distortion $D'$ such that for $\eps > 0$, $$\bI(\bP_{\environment,\chi}) + \beta_D \bE_{\environment,\chi}[d(\environment,\chi)] \leq \mc{F}(\beta_D) + \eps.$$ Using the previous two inequalities, we have 
\begin{align*}
    \mc{F}(\beta_D) + \eps \geq \mc{R}(D') + \beta_D D' \geq \mc{R}(D) + \beta_D D &\geq \mc{F}(\beta_D)
\end{align*}
Since our choice of $\eps$ was arbitrary, taking the limit as $\eps \ra 0$ yields $$\mc{F}(\beta_D) = \mc{R}(D) + \beta_D D.$$

Notably, the Blahut-Arimoto algorithm for computing rate-distortion functions operates with a specification of $\beta$, rather than $D$. Let $\bP_{\environment,\chi}^\star$ be the joint distribution such that $$\mc{F}(\beta) = \inf\limits_{\bP_{\environment,\chi} \in \Lambda} \left(\bI(\bP_{\environment,\chi}) + \beta \bE_{\bP_{\environment,\chi}}[d(\environment,\chi)]\right) = \bI(\bP_{\environment,\chi}^\star) + \beta \bE_{\bP_{\environment,\chi}^\star}[d(\environment,\chi)].$$ Defining $D_\beta = \bE_{\bP_{\environment,\chi}^\star}[d(\environment,\chi)]$ and $R_\beta = \bI(\bP_{\environment,\chi}^\star)$, we have the lower bound $$\mc{F}(\beta) = R_\beta + \beta D_\beta \geq \inf\limits_{\bP_{\environment, \chi} \in \Lambda : \bE\left[d(\environment, \chi)\right] \leq D_\beta} \bI(\bP_{\environment, \chi}) + \beta D_\beta = \mc{R}(D_\beta) + \beta D_\beta.$$ For the upper bound, we have 
\begin{align*}
    \mc{F}(\beta) &= \inf\limits_{\bP_{\environment, \chi} \in \Lambda} \left(\bI(\bP_{\environment, \chi}) + \beta \bE_{\bP_{\environment,\chi}}[d(\environment,\chi)] \right) \\
    &= \inf\limits_{\bP_{\environment, \chi} \in \Lambda : \bE\left[d(\environment, \chi)\right] \leq D_\beta} \left(\bI(\bP_{\environment, \chi}) + \beta \bE_{\bP_{\environment,\chi}}[d(\environment,\chi)]\right) \\
    &\leq \inf\limits_{\bP_{\environment, \chi} \in \Lambda : \bE\left[d(\environment, \chi)\right] \leq D_\beta} \left(\bI(\bP_{\environment, \chi}) + \beta D_\beta \right) \\
    &= \inf\limits_{\bP_{\environment, \chi} \in \Lambda : \bE\left[d(\environment, \chi)\right] \leq D_\beta} \bI(\bP_{\environment, \chi}) + \beta D_\beta \\ 
    &= \mc{R}(D_\beta) + \beta D_\beta
\end{align*}
Putting both inequalities together shows that $\mc{F}(\beta) = \mc{R}(D_\beta) + \beta D_\beta$, which implies that $\beta$ is associated with $D_\beta$ and $\mc{R}(D_\beta) = \mc{F}(\beta) - \beta D_\beta$.
\end{proof}

\begin{lemma}[Lemma 1.2 \citep{csiszar1974extremum}]
Let $\bQ_\chi$ be an arbitrary marginal distribution over $\chi$. Moreover, let $\bP_{\environment,\chi}$ be an arbitrary joint distribution over $\environment,\chi$ with an associated marginal distribution $\bP_{\chi}$ and take $\bQ_{\environment,\chi}$ to be the joint distribution as defined in Equation \ref{eq:ba_target_update}. Then,
\begin{align*}
    \mc{J}(\bP_{\environment, \chi}, \bQ_{\chi}, \beta) &\geq  \mc{J}(\bP_{\environment, \chi}, \bP_{\chi}, \beta) \\
    \mc{J}(\bP_{\environment, \chi}, \bQ_{\chi}, \beta) &\geq  \mc{J}(\bQ_{\environment, \chi}, \bQ_{\chi}, \beta)
\end{align*}
\label{lemma:ba_ineqs}
\end{lemma}
\begin{proof}
For the first inequality, recall that $\bP_\chi$ is the true marginal distribution over learning targets induced by $\bP_{\environment,\chi}$. Moreover, by definition, we have that $$\mc{J}(\bP_{\environment, \chi}, \bP_{\chi}, \beta) = D_\text{KL}(\bP_{\environment, \chi} || \bP_{\environment} \times \bP_{\chi}) + \beta \bE_{\bP_{\environment, \chi}}[d(\environment, \chi)] = \bI(\environment; \chi) + \beta \bE_{\bP_{\environment, \chi}}[d(\environment, \chi)].$$ Therefore, by Fact \ref{fact:mi_inf}, it follows that $$\mc{J}(\bP_{\environment, \chi}, \bQ_{\chi}, \beta) = \mc{J}(\bP_{\environment, \chi}, \bP_{\chi}, \beta) + \kl{\bP_\chi}{\bQ_\chi}.$$ The inequality then follows immediately by Fact \ref{fact:gibbs}.

For the second inequality, we have 
\begin{align*}
    \mc{J}(&\bP_{\environment,\chi}, \bQ_\chi, \beta) = \kl{\bP_{\environment,\chi}}{\bP_\environment \times \bQ_\chi} + \beta \bE_{\bP_{\environment, \chi}}[d(\environment, \chi)] \\
    &= \int \log\left(\frac{d\bP_{\environment,\chi}}{d\bP_\environment \times \bQ_\chi}(\environment,\chi)\right)d\bP_{\environment,\chi}(\environment,\chi) - \int \log\left(\exp\left(-\beta d(\environment,\chi)\right)\right) d\bP_{\environment,\chi}(\environment,\chi) \\
    &= \int \log\left(\frac{d\bP_{\environment,\chi}}{d\bP_\environment \times \bQ_\chi}(\environment,\chi)\right)d\bP_{\environment,\chi}(\environment,\chi) - \int \log\left(\alpha_{\bQ_{\chi},\beta}(\environment) \exp\left(-\beta d(\environment,\chi)\right)\right) d\bP_{\environment,\chi}(\environment,\chi) + \int \log\left(\alpha_{\bQ_\chi,\beta}(\environment)\right) d\bP_{\environment}(\environment)\\
    &= \int \log\left(\frac{d\bP_{\environment,\chi}}{d\bP_\environment \times \bQ_\chi}(\environment,\chi)\right)d\bP_{\environment,\chi}(\environment,\chi) - \int \log\left(\frac{d\bQ_{\environment,\chi}}{d\bP_{\environment} \times \bQ_{\chi}}(\environment,\chi)\right) d\bP_{\environment,\chi}(\environment,\chi) + \int \log\left(\alpha_{\bQ_\chi,\beta}(\environment)\right) d\bP_{\environment}(\environment)\\
    &= \int \log\left(\frac{d\bP_{\environment,\chi}}{d\bP_\environment \times \bQ_\chi}(\environment,\chi)\cdot \frac{d\bP_{\environment} \times \bQ_{\chi}}{d\bQ_{\environment,\chi}}(\environment,\chi)\right)d\bP_{\environment,\chi}(\environment,\chi) + \int \log\left(\alpha_{\bQ_\chi,\beta}(\environment)\right) d\bP_{\environment}(\environment)\\
    &= \int \log\left(\frac{d\bP_{\environment,\chi}}{d\bQ_{\environment,\chi}}(\environment,\chi)\right)d\bP_{\environment,\chi}(\environment,\chi) + \int \log\left(\alpha_{\bQ_\chi,\beta}(\environment)\right) d\bP_{\environment}(\environment)\\
    &= \kl{\bP_{\environment,\chi}}{\bQ_{\environment,\chi}} + \int \log\left(\alpha_{\bQ_\chi,\beta}(\environment)\right) d\bP_{\environment}(\environment)
\end{align*}
where the sixth line employs the chain rule for Radon-Nikodym derivatives (Fact \ref{fact:rn_chain}). Expanding the second term and using the fact marginal distribution over $\environment$ induced by $\bQ_{\environment,\chi}$ is the same as $\bP_{\environment}$, we have
\begin{align*}
    \int \log\left(\alpha_{\bQ_\chi,\beta}(\environment)\right) d\bP_{\environment}(\environment) &= \int \log\left(\alpha_{\bQ_\chi,\beta}(\environment)\right) d\bQ_{\environment,\chi}(\environment,\chi) \\
    &= \int \log\left(\alpha_{\bQ_\chi,\beta}(\environment)\exp\left(-\beta d(\environment,\chi)\right)\exp\left(\beta d(\environment,\chi)\right)\right) d\bQ_{\environment,\chi}(\environment,\chi) \\
    &= \int \log\left(\alpha_{\bQ_\chi,\beta}(\environment)\exp\left(-\beta d(\environment,\chi)\right)\right) d\bQ_{\environment,\chi}(\environment,\chi) + \int \log\left(\exp\left(\beta d(\environment,\chi)\right)\right) d\bQ_{\environment,\chi}(\environment,\chi)\\
    &= \int \log\left(\frac{d\bQ_{\environment,\chi}}{d\bP_\environment \times \bQ_\chi}(\environment,\chi)\right) d\bQ_{\environment,\chi}(\environment,\chi) + \beta \int d(\environment,\chi)  d\bQ_{\environment,\chi}(\environment,\chi)\\
    &= \kl{\bQ_{\environment,\chi}}{\bP_\environment \times \bQ_\chi} + \beta \bE_{\bQ_{\environment,\chi}}\left[d(\environment,\chi)\right] \\
    &= \mc{J}(\bQ_{\environment,\chi}, \bQ_{\chi}, \beta)
\end{align*}
Substituting back and applying Fact \ref{fact:gibbs} once more yields the inequality
\begin{align*}
    \mc{J}(\bP_{\environment,\chi}, \bQ_\chi, \beta) &= \kl{\bP_{\environment,\chi}}{\bQ_{\environment,\chi}} + \int \log\left(\alpha_{\bQ_\chi,\beta}(\environment)\right) d\bP_{\environment}(\environment) \\
    &= \kl{\bP_{\environment,\chi}}{\bQ_{\environment,\chi}} + \mc{J}(\bQ_{\environment,\chi}, \bQ_{\chi}, \beta) \\
    &\geq \mc{J}(\bQ_{\environment,\chi}, \bQ_{\chi}, \beta)
\end{align*}
\end{proof}

\begin{lemma}[Lemma 2 - \citep{palaiyanur2008uniform}]
Let $\environment,\chi$ be discrete random variables. Let $\bP_\environment$ denote the agent's current posterior over $\environment$ and let $\widehat{\bP}_\environment$ be the empirical distribution. If $||\bP_\environment - \widehat{\bP}_\environment||_1 \leq \frac{D(\bP_\environment)}{4}$, then for any $D \geq 0$, $$|\mc{R}(D) - \widehat{\mc{R}}(D)| \leq \frac{7}{D(\bP_\environment)} ||\bP - \widehat{\bP}_\environment||_1 \log\left(\frac{|\Sigma||\Xi|}{||\bP_\environment - \widehat{\bP}_\environment||_1}\right).$$
\label{lemma:unicon_rdf}
\end{lemma}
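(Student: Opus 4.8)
The plan is to reproduce the proof of Lemma~2 in \citet{palaiyanur2008uniform} essentially verbatim, substituting their minimum positive distortion $\widetilde D$ by $D(\bP_\environment)$ throughout; the only property of $\widetilde D$ their argument uses is that it lower-bounds every \emph{nonzero} value of $d(\environment,\chi)$ attained by a pair in the support of the source, and since $\mathrm{supp}(\widehat{\bP}_\environment)\subseteq\mathrm{supp}(\bP_\environment)$ the quantity $D(\bP_\environment)$ serves this role for $\mc{R}$ and $\widehat{\mc{R}}$ simultaneously. Write $\Delta=||\bP_\environment-\widehat{\bP}_\environment||_1$, assume $\Delta>0$ (the claim being vacuous otherwise), and note that $\Delta\le D(\bP_\environment)/4\le 1/4<1/2$, since $D(\bP_\environment)$ is itself a value of the regret-based distortion and hence at most $d^\star=1$ under the standard assumption of rewards in $[0,1]$. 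I work throughout with the test-channel form $\mc{R}(D)=\inf_W\bI(\environment;\chi)$ over kernels $W=\bP_{\chi\mid\environment}$ satisfying $\bE[d(\environment,\chi)]\le D$ (and its analogue for $\widehat{\mc{R}}$), and I write $\mc{F}(\beta)=\inf_W\big(\bI(\environment;\chi)+\beta\,\bE[d(\environment,\chi)]\big)$ for the Lagrangian relaxation under $\bP_\environment$, with $\widehat{\mc{F}}(\beta)$ the same object under $\widehat{\bP}_\environment$.

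First I would reduce to a bound on the Lagrangians that is uniform in $D$. Applying Lemma~\ref{lemma:rd_slope} to each source gives $\mc{R}(D)=\max_{\beta\ge 0}\big(\mc{F}(\beta)-\beta D\big)$ and $\widehat{\mc{R}}(D)=\max_{\beta\ge 0}\big(\widehat{\mc{F}}(\beta)-\beta D\big)$; if $\beta^\star$ attains the first maximum then
\[
\mc{R}(D)-\widehat{\mc{R}}(D)\;\le\;\big(\mc{F}(\beta^\star)-\beta^\star D\big)-\big(\widehat{\mc{F}}(\beta^\star)-\beta^\star D\big)\;=\;\mc{F}(\beta^\star)-\widehat{\mc{F}}(\beta^\star),
\]
and the symmetric inequality bounds $\widehat{\mc{R}}(D)-\mc{R}(D)$, so it suffices to bound $\sup_{\beta\ge 0}|\mc{F}(\beta)-\widehat{\mc{F}}(\beta)|$. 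The cancellation of $\beta^\star D$ is precisely what renders the (possibly unbounded) steepness of the rate--distortion curve near $D=0$ harmless.

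Next I would compare $\mc{F}(\beta)$ with $\widehat{\mc{F}}(\beta)$ by swapping test channels: take a minimizing kernel $W_\beta$ for $\mc{F}(\beta)$ (or a minimizing sequence if the infimum is not attained) and evaluate it against $\widehat{\bP}_\environment$. Total-variation contraction through $W_\beta$ moves the induced $\chi$-marginal by at most $\Delta$ in $\ell_1$, so Fact~\ref{fact:unicont_entropy} bounds the change in $\bH(\chi)$ by $\Delta\log(|\Xi|/\Delta)$, while $\bH(\chi\mid\environment)$ is an affine functional of the source law with coefficients $\bH(W_\beta(\cdot\mid\environment))\in[0,\log|\Xi|]$ and so changes by at most $\Delta\log|\Xi|$. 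Hence the mutual-information term changes by at most $2\Delta\log(|\Sigma||\Xi|/\Delta)$ and the expected-distortion term by at most $\beta\Delta$, giving $\widehat{\mc{F}}(\beta)-\mc{F}(\beta)\le 2\Delta\log(|\Sigma||\Xi|/\Delta)+\beta\Delta$ and, symmetrically, the same bound for $\mc{F}(\beta)-\widehat{\mc{F}}(\beta)$.

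The principal obstacle — and the heart of Palaiyanur--Sahai's argument — is absorbing the growth of the $\beta\Delta$ term; here the distortion gap enters, and this is what the hypothesis $\Delta\le D(\bP_\environment)/4$ is for. When the minimizing kernel $W_\beta$ has expected distortion $D_\beta$ bounded below by a fixed multiple of $D(\bP_\environment)$, the chain $\mc{F}(\beta)\le\mc{R}(0)\le\log(|\Sigma||\Xi|)$ (the first step because restricting the infimum to zero-distortion kernels cannot lower it, the second witnessed by the kernel $\environment\mapsto A^\star(\environment)$) combined with $\mc{F}(\beta)=\mc{R}(D_\beta)+\beta D_\beta\ge\beta D_\beta$ forces $\beta\le\log(|\Sigma||\Xi|)/D(\bP_\environment)$, so $\beta\Delta$ is already of the claimed order. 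When instead $D_\beta$ is small, the gap forces $\bP_\environment\big(d(\environment,\chi)>0\big)\le D_\beta/D(\bP_\environment)$, i.e.\ $W_\beta$ is a small perturbation of a zero-distortion kernel; redirecting its ``error mass'' onto $A^\star(\environment)$ produces a kernel of \emph{zero} distortion whose rate exceeds that of $W_\beta$ by only $O\!\big((D_\beta/D(\bP_\environment))\log(\cdot)\big)$ (again via Fact~\ref{fact:unicont_entropy}), an excess dominated by the $\beta D_\beta$ one recovers. Carrying out this dichotomy with explicit constants — and verifying that every total-variation perturbation invoked stays below $1/2$ so Fact~\ref{fact:unicont_entropy} applies, which is exactly what $\Delta\le D(\bP_\environment)/4$ buys — yields $|\mc{F}(\beta)-\widehat{\mc{F}}(\beta)|\le\frac{7}{D(\bP_\environment)}\Delta\log(|\Sigma||\Xi|/\Delta)$ for every $\beta\ge 0$, whence the reduction above gives the stated bound. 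The delicate step, which I would lift intact from \citet{palaiyanur2008uniform}, is the small-$D_\beta$ case: one must simultaneously track the error-mass fraction, the entropy perturbation it induces, and its interplay with the source perturbation, and it is there that the constant $7$ is finally pinned down.
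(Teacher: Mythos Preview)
Your proposal is correct and takes essentially the same approach as the paper: the paper's own proof is a one-line deferral stating that ``the proof of this augmented result still follows the same argument outlined in Section~V of \citet{palaiyanur2008uniform},'' and your proposal is precisely a detailed sketch of that argument with the substitution $\widetilde{D}\to D(\bP_\environment)$ made explicit. If anything, you supply considerably more of the mechanics (the Lagrangian reduction, the channel-swapping step, the small-$D_\beta$/large-$D_\beta$ dichotomy) than the paper itself does.
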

\begin{proof}
The proof of this augmented result still follows the same argument outlined in Section V of \citet{palaiyanur2008uniform}.
\end{proof}

\begin{corollary}[Lemma 5 - \citep{palaiyanur2008uniform}]
For any $\delta \in (0,1),\eps \in (0,\log(\Sigma))$ let $\bP_\environment \in \Delta(\Sigma)$ be the current posterior over $\environment$. Additionally, let $\phi^{-1}$ denote the inverse of the function $\phi: [0,\frac{1}{2}] \ra \bR$ where $\phi(t) = t \log\left(\frac{|\Sigma||\Xi|}{t}\right)$. If $$z \geq \frac{2}{\phi^{-1}\left(\frac{\eps D(\bP_\environment)}{7}\right)^2}\left(\log\left(\frac{1}{\delta}\right) + |\Sigma| \log(2)\right),$$ then $$\bP(|\mc{R}(D) - \widehat{\mc{R}}(D)| \geq \eps) \leq \delta.$$
\end{corollary}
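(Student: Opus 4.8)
The plan is to combine the deterministic perturbation bound of Lemma~\ref{lemma:unicon_rdf} with a standard concentration inequality controlling the $\ell_1$ deviation of the empirical distribution $\widehat{\bP}_\environment$, formed from the $z$ posterior samples, from the true posterior $\bP_\environment$. Set the target accuracy $\gamma = \phi^{-1}\!\left(\tfrac{\eps D(\bP_\environment)}{7}\right)$. First I would record the elementary facts about $\phi(t) = t\log\!\left(\tfrac{|\Sigma||\Xi|}{t}\right)$: its derivative is $\log(|\Sigma||\Xi|/t) - 1$, which is positive on $(0,\tfrac12]$ whenever $|\Sigma||\Xi| \ge 2$, so $\phi$ is strictly increasing there with $\phi(0) = 0$, and hence $\phi^{-1}$ is well defined and monotone on $[0,\phi(\tfrac12)]$. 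Since $\eps < \log(|\Sigma|)$ and $D(\bP_\environment) \le 1$ under the standing assumption of $[0,1]$-bounded rewards, $\tfrac{\eps D(\bP_\environment)}{7}$ lies in this interval, so $\gamma \in (0,\tfrac12]$. Monotonicity of $\phi$ together with the chain $\eps < \log(|\Sigma|) < \tfrac{7}{4}\log\!\left(\tfrac{4|\Sigma||\Xi|}{D(\bP_\environment)}\right)$ (valid because $D(\bP_\environment)\le 1$) gives $\tfrac{\eps D(\bP_\environment)}{7} \le \phi\!\left(\tfrac{D(\bP_\environment)}{4}\right)$, hence $\gamma \le \tfrac{D(\bP_\environment)}{4}$; this is exactly the hypothesis needed to apply Lemma~\ref{lemma:unicon_rdf}.

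Next I would invoke the classical deviation bound for the empirical distribution of $z$ i.i.d.\ draws over a finite alphabet of size $|\Sigma|$ (obtainable, e.g., via the method of types, as in \citet{palaiyanur2008uniform}): for any $\gamma > 0$,
$$\bP\!\left(||\bP_\environment - \widehat{\bP}_\environment||_1 \ge \gamma\right) \;\le\; 2^{|\Sigma|}\exp\!\left(-\tfrac{z\gamma^2}{2}\right).$$
Requiring the right-hand side to be at most $\delta$ and taking logarithms rearranges precisely to $z \ge \tfrac{2}{\gamma^2}\big(\log(1/\delta) + |\Sigma|\log 2\big)$, which upon substituting $\gamma = \phi^{-1}(\eps D(\bP_\environment)/7)$ is exactly the sample-size condition in the statement. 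Hence, for the stated $z$, the event $\mc{G} = \{\,||\bP_\environment - \widehat{\bP}_\environment||_1 < \gamma\,\}$ has probability at least $1 - \delta$.

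Finally, on $\mc{G}$ I would apply Lemma~\ref{lemma:unicon_rdf}: since $||\bP_\environment - \widehat{\bP}_\environment||_1 < \gamma \le \tfrac{D(\bP_\environment)}{4}$, the lemma gives $|\mc{R}(D) - \widehat{\mc{R}}(D)| \le \tfrac{7}{D(\bP_\environment)}\,\phi\!\left(||\bP_\environment - \widehat{\bP}_\environment||_1\right)$, and monotonicity of $\phi$ on $[0,\tfrac12]$ bounds this by $\tfrac{7}{D(\bP_\environment)}\phi(\gamma) = \tfrac{7}{D(\bP_\environment)}\cdot\tfrac{\eps D(\bP_\environment)}{7} = \eps$. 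Taking complements yields $\bP(|\mc{R}(D) - \widehat{\mc{R}}(D)| \ge \eps) \le \delta$. The mathematical content is carried entirely by Lemma~\ref{lemma:unicon_rdf}; the only real work here is bookkeeping — checking that $\phi$ is invertible on the relevant interval, that $\gamma \le D(\bP_\environment)/4$ so the lemma is applicable, and that the $\ell_1$-concentration constants produce exactly the claimed threshold — so I expect the ``obstacle,'' such as it is, to be nothing more than verifying these side conditions hold under $\delta \in (0,1)$ and $\eps \in (0,\log(|\Sigma|))$.
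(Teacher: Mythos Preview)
Your proposal is correct and follows exactly the paper's approach: apply Lemma~\ref{lemma:unicon_rdf} to reduce the rate-distortion deviation to an $\ell_1$ deviation, then invoke the $2^{|\Sigma|}\exp(-z\gamma^2/2)$ concentration bound (which the paper cites as Theorem~2.1 of \citet{weissman2003inequalities}) and rearrange. Your version is simply more explicit in verifying the side conditions (invertibility of $\phi$, $\gamma \le D(\bP_\environment)/4$) that the paper leaves implicit.
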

\begin{proof}
$$\bP(|\mc{R}(D) - \widehat{\mc{R}}(D)| \geq \eps) \leq \bP\left(||\bP_\environment - \widehat{\bP}_\environment||_1  \geq \phi^{-1}\left(\frac{\eps D(\bP_\environment)}{7}\right)\right) \leq 2^{|\Sigma|}\exp\left(-\frac{z}{2}\phi^{-1}\left(\frac{\eps D(\bP_\environment)}{7}\right)\right)^2,$$
where the first inequality follows from Lemma \ref{lemma:unicon_rdf} and the second inequality follows as Theorem 2.1 of \citep{weissman2003inequalities}. Setting the left-hand side equal to $\delta$ and re-arranging terms yields the inequality.
\end{proof}

\section{On the Optimality of Rate-Distortion-Theoretic Learning Targets}

We examine independent Bernoulli and Gaussian bandits with 50 arms and sweep across numerous $\beta$ values to trace the shape of the resulting rate-distortion curves associated with the various learning targets induced at the first time period. We compare this to the hand-crafted target action of \citet{russo2018satisficing,lu2021reinforcement} that simply takes the first action whose average reward $\overline{R}_a$ is within $\eps \geq 0$ of the optimal $\overline{R}^\star$: 
\begin{align*}
    \chi = \min\{a \in \mc{A} \mid \overline{R}_a \geq \overline{R}^\star - \eps \}.
\end{align*}
Analogously sweeping over values of $\eps$ generates the results of Figure \ref{fig:rdt_target_vs_sts} where we notice a substantially improved information-performance trade-off from the Blahut-Arimoto algorithm. Such results highlight the importance of rate-distortion theory in yielding optimized learning targets that may otherwise be difficult for agent designers to engineer by hand.
\begin{figure}[H]
\centering
\begin{subfigure}{.5\textwidth}
  \centering
  \includegraphics[width=.9\linewidth]{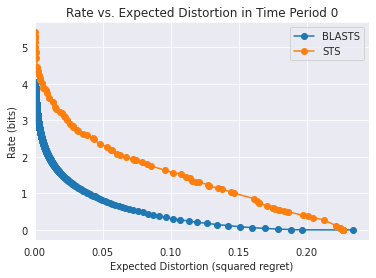}
  \caption{Independent Bernoulli bandit with 50 arms.}
\end{subfigure}%
\begin{subfigure}{.5\textwidth}
  \centering
  \includegraphics[width=.9\linewidth]{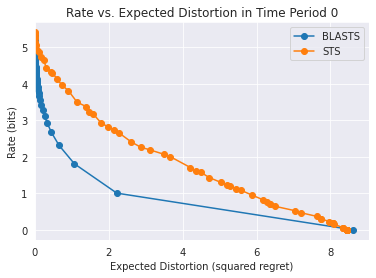}
  \caption{Independent Gaussian bandit with 50 arms.}
\end{subfigure}
\caption{Comparing target actions computed via the Blahut-Arimoto actions (BLASTS) vs. the hand-crafted target action (STS) examined by \citet{russo2018satisficing,lu2021reinforcement}.}
\label{fig:rdt_target_vs_sts}
\end{figure}

\end{document}